\documentclass[letterpaper]{article} %
\usepackage[hyphens]{url}  %
\usepackage{graphicx} %
\urlstyle{rm} %
\usepackage{natbib}  %
\usepackage{caption} %
\frenchspacing  %
\setlength{\pdfpagewidth}{8.5in} %
\setlength{\pdfpageheight}{11in} %
\usepackage[left=2.2cm, right=2.2cm, top=2.2cm, bottom=2.2cm]{geometry}
\usepackage{algorithm}
\usepackage[noend]{algorithmic}

\usepackage{tikz}

\usepackage{enumitem}
\setlist{topsep=2pt}

\usepackage{newfloat}
\usepackage{listings}
\DeclareCaptionStyle{ruled}{labelfont=normalfont,labelsep=colon,strut=off} %
\pdfinfo{
/TemplateVersion (2024.1)
}

\usepackage{amsmath}
\usepackage{amsthm}
\usepackage{amssymb}
\usepackage{mathtools}
\usepackage{pgf}

\newcommand{\BPP}{\mathsf{BPP}}

\newcommand{\PSPACE}{\mathsf{PSPACE}}

\newcommand{\Res}{\operatorname{Res}}

\newcommand{\Rset}{\mathbb{R}}

\theoremstyle{plain}
\newtheorem{theorem}{Theorem}

\newtheorem{lemma}[theorem]{Lemma}
\newtheorem{corollary}[theorem]{Corollary}
\theoremstyle{definition}
\newtheorem{definition}[theorem]{Definition}

\newtheorem{fact}[theorem]{Fact}
\newtheorem{observation}[theorem]{Observation}
\theoremstyle{remark}
\newtheorem{remark}[theorem]{Remark}
\newtheorem{example}[theorem]{Example}

\definecolor{bananamania}{rgb}{0.98, 0.91, 0.71}
\definecolor{bleudefrance}{rgb}{0.19, 0.55, 0.91}

\title{Identification for Tree-Shaped Structural Causal Models\\in Polynomial Time}
\author{
Aaryan Gupta\thanks{Department of Computer Science \& Engineering, Indian Institute of Technology Bombay, Mumbai, India.\\ Email: aaryanguptaxyz@gmail.com}
\and
Markus Bl\"a{ser}\thanks{Saarland University, Saarland Informatics Campus, Saarbr\"ucken, Germany.\\ Email: mblaeser@cs.uni-saarland.de}
}
    
\begin{document}
\date{}
\maketitle

\begin{abstract}
Linear structural causal models (SCMs) are used to express
and analyse the relationships between random variables.
Direct causal effects are represented as directed edges and confounding factors as bidirected edges. 
Identifying the causal parameters from correlations between the nodes is an open problem
in artificial intelligence. 
In this paper, we study SCMs whose directed component forms a tree. 
Van der Zander et al.~(AISTATS'22, PLMR 151, pp. 6770--6792, 2022) give a PSPACE-algorithm
for the identification problem in this case, which is a significant improvement over the 
general Gr\"obner basis approach, which has doubly-exponential time complexity in the number 
of structural parameters. In this work, we present a randomized polynomial-time algorithm, which
solves the identification problem for tree-shaped SCMs.
For every structural parameter, our algorithms
decides whether it is generically identifiable, 
generically 2-identifiable, or generically unidentifiable. (No other cases can occur.)
In the first two cases, it provides one or two  fractional affine square root terms
of polynomials (FASTPs) for the corresponding parameter, respectively.
\end{abstract}

\section{Introduction}

Linear structural causal models (SCMs) model
relationships between random variables \citep{bollen2014structural,duncan2014introduction}.
We are given random variables $V_0, V_1,\dots,V_n$, which depend linearly  
on  the other variables and an additional error term $\epsilon_i$, that is,
we can write $V_i=\sum_{j} \lambda_{j,i} V_j +\epsilon_i$.
The error terms are normally distributed 
with zero mean and covariances between these terms
given by some matrix $\Omega=(\omega_{i,j})$.
We consider only \emph{recursive models}, i.e., for all $j>i$, we have $\lambda_{j,i} = 0$. 
Such a model can be represented by a so-called mixed graph: The nodes $0,\dots,n$ correspond to the 
variables $V_0,\dots,V_n$. 
Directed edges represent a linear influence $\lambda_{j,i}$ of a parent node $j$ on its child
 $i$. Bidirected edges represent an additional correlation $\omega_{i,j}\neq 0$ between the error terms.
 
Figure~\ref{fig:SCM:1} shows a classical example of an SCM $M_1$.
A change of $V_0$ by $1$ implies a change of $\lambda_{0,1}$ of $V_1$ 
and a change of $\lambda_{01}\lambda_{12}$ of $V_2$. The covariance 
$\sigma_{01}$  between $V_0$ and $V_1$ 
is thus $\lambda_{01}$ and between $V_0$ and $V_2$, it is
$\lambda_{01}\lambda_{12}$. 

More general, we write 
the coefficients of all directed edges as a matrix 
$\Lambda=(\lambda_{i,j})$ and the coefficients of all bidirected edges as 
$\Omega=(\omega_{i,j})$. Then the matrix $\Sigma = (\sigma_{i,j})$ of covariances 
between the variables $V_i$ and $V_j$ is given by
\begin{equation}
\Sigma = (I - \Lambda)^{-1} \Omega (I-\Lambda)^{-T}, \label{eqn:SEM}
\end{equation}
see e.g.~\citet{drton2018algebraic}.
\begin{figure}[t]
\centering
\begin{minipage}{0.45 \columnwidth}
\centering
\begin{tikzpicture}[scale=0.9]
    \node[circle, draw] (z) at (0,0) {0};
    \node[circle, draw] (x) at (1.5,0) {1};
    \node[circle, draw] (y) at (3,0) {2};
    \draw [->] (z) -- node [midway,below] {$\lambda_{0,1}$} (x);
    \draw [->] (x) -- node [midway,below] {$\lambda_{1,2}$} (y);
    \draw [<->] (x) edge [bend left=45] node [midway,above]{$\omega_{1,2}$}(y);
  \end{tikzpicture}
\caption{SCM $M_1$, a classical IV example \label{fig:SCM:1}}
\end{minipage}~~~\begin{minipage}{0.45 \columnwidth}
\centering
\begin{tikzpicture}[scale=0.9]
    \node[circle, draw] (z) at (0,0) {0};
    \node[circle, draw] (x) at (1.5,0) {1};
    \node[circle, draw] (y) at (3,0) {2};
     \node[circle, draw] (a) at (0,1.5) {3};
    \node[circle, draw] (b) at (1.5,1.5) {4};
    \draw [->] (z) -- node [midway,below] {$\lambda_{0,1}$} (x);
    \draw [->] (x) -- node [midway,below] {$\lambda_{1,2}$} (y);
    \draw [->] (z) -- node [midway,left] {$\lambda_{0,3}$} (a);
    \draw [->] (x) -- node [midway,left] {$\lambda_{1,4}$} (b);
    \draw [<->] (b) edge [bend left=45] node [midway,above right]{$\omega_{2,4}$} (y);
    \draw [<->] (b) edge [bend left=45] node [midway, right]{$\omega_{1,4}$} (x); 
  \end{tikzpicture}
\caption{Tree-shaped SCM $M_2$ \label{fig:SCM:2}}
\end{minipage}
\end{figure}
The reverse problem, the identification of causal effects, is a major
open problem. Given the mixed graph and the matrix of covariances~$\Sigma$, 
calculate the matrix~$\Lambda$. (By (\ref{eqn:SEM}), this also
identifies $\Omega$.) 
We here deal with what is called the
\emph{identification problem}: Is the number of solutions
for a particular $\lambda_{i,j}$ finite and if yes,
provide corresponding symbolic expressions for $\lambda_{i,j}$.
If there is exactly one solution for $\lambda_{i,j}$, we call $\lambda_{i,j}$ identifiable,
if there are $k$ solutions, we call it $k$-identifiable, and if there is an infinite
number of solutions, we call it unidentifiable.

There has been a considerable amount of research on identification in linear SCMs
in economics, statistics, and artificial intelligence.
Of particular interest here is the pioneering work by \citet{Pearl2009}
on the computational aspects of identification and subsequent works.
Most approaches are based on \emph{instrumental variables} 
\citep{RePEc:cup:cbooks:9780521385824}.  
In Figure~\ref{fig:SCM:1},
one can calculate $\lambda_{1,2} = \frac{\lambda_{0,1}\lambda_{1,2}}{\lambda_{0,1}} = \frac{\sigma_{0,2}}{\sigma_{0,1}}$. $V_0$ is called an instrumental variable (IV) in this case. 
Since IVs are not complete, that is, not all identifiable parameters can be identified with IVs,
more complex generalizations of IVs have been developed,
for instance conditional instrumental variables \citep{RePEc:cup:cbooks:9780521385824,PearlConditionalIV,IJCAIInstruments},
instrumental sets \citep{BritoPearlUAI02,brito2010instrumental,brito2002graphical,instrumentalVariablesZander2016}, 
half-treks \citep{halftrek2012}, 
auxiliary instrumental variables \citep{chen2015incorporating},  
determinantal instrumental variables \citep{identifyingEdgeWiseDeterminantalDrton}, 
instrumental cutsets \citep{kumor2019instrumentalCutsets},  
or auxiliary cutsets \citep{kumor2020auxiliaryCutsets}.

A major drawback of all criteria mentioned above is that they are not complete,
that is, there are examples where they fail to identify a parameter that in principle 
is identifiable.  
An alternative approach is to solve the system of polynomial equations (\ref{eqn:SEM})
using Gr\"obner bases \citep{garcia2010identifying}. 
By the properties of Gr\"obner bases, this approach is complete, that is,
it identifies all parameters that are identifiable. 
However, Gr\"obner base algorithms typically have a doubly exponential running time and  
are often too slow to be used in practice once the SCMs get larger. 
 
\subsection{Our Results}

Since it seems to be difficult to find criteria that are complete and can be decided in
polynomial time, we restrict the underlying graph class. \Citet{DBLP:conf/aistats/ZanderWBL22} consider tree-shaped
SCMs $M = (V,D,B)$. Here, the graph $(V,D)$ of directed edges form a directed tree
with root $0$. Figure~\ref{fig:SCM:2} shows an example, but also Figure~\ref{fig:SCM:1} is one
with the tree being a path. \Citet{DBLP:conf/aistats/ZanderWBL22} propose an algorithm for tree-shaped SCMs called
TreeID, which works by algebraically solving systems of equations.
They prove that their algorithm is a PSPACE algorithm , 
a vast improvement over the general Gr\"obner basis approach,
but still with exponential running time in the worst case.
Moreover, they do not show that their algorithm is complete. 

In this work, we provide an algorithm which solves the identification problem in tree-shaped   
SCMs in randomized polynomial time. Hence the problem can be solved in the complexity
class $\BPP$.
For each variable $\lambda_{i,j}$, we can decide
in randomized polynomial time whether it is generically identifiable, $2$-identifiable, or 
unidentifiable. No other cases can occur. In the first two cases, we present one or two
rational expressions that might contain a square root (a so-called FASTP). 
Generically here means that the denominators are non-zero expressions but can become
zero when plugging in concrete values.
Generic identifiability is also called almost everywhere identifiability,
see e.g.\ \citet{garcia2010identifying,halftrek2012}.
 
Every missing bidirected edge in $M$ 
gives rise to a bilinear equation. Here a bidirected edge is called
missing if it does not appear in $M$, i.e. $e \notin B$.
\Citet{DBLP:conf/aistats/ZanderWBL22} show that if we have a cycle of missing edges,
then one can potentially identify or $2$-identify the parameters of the directed
edges entering the missing cycle. (Note that as the directed edges form a tree,
for every node $i$ in $V$, there is exactly one directed edge with head $i$.
We will often relate the parameter $\lambda_{p,i}$ of the unique edge entering $i$
to $i$ itself and say that $i$ is identifiable.)
However, one can be unlucky, and the missing cycle does not identify any parameter.
Therefore, \citet{DBLP:conf/aistats/ZanderWBL22} need to enumerate all cycles 
of missing edges, which can be exponentially many.
 
 As our first contribution, we present an algorithm which can find an identifying missing cycle in randomized polynomial
time, avoiding the need to enumerate all cycles. This is crucial to achieve polynomial running time. Second, we relate the concept of missing cycles to the theory of resultants. The crucial
point here is that all equations that arise from the missing edges have only two variables,
which is essentially the theory of plane algebraic curves, which is well understood.
This allows us to show the completeness of our algorithm for tree-shaped models.
Thus, tree-shaped SCMs are a class of graphs with an identification algorithm that is complete and has polynomial running time.
Furthermore, we associate the concept of rank with each missing edge, which essentially
means whether the corresponding bilinear polynomial factors or not. Rank-1 edges allow
to identify one parameter directly, but do not tell anything about the other parameter in
the corresponding equation. If we know one of the parameters in a rank-2 edge, however,
then this will identify the other parameter. But they do not identify one of the parameters
directly.

\section{Preliminaries}

We consider \emph{mixed graphs} $M = (V,D,B)$ with $n+1$ nodes $V = \{0,1,\dots,n\}$.
With each node we associate a random variable $V_i$.
$D$ is the set of directed edges. We denote directed edges either by $(i,j)$ or
$i \to j$. $B$ is the set of bidirected edges, which we denote by 
$\{i,j\}$ or $i \leftrightarrow j$.
We only consider the case when $M$ is acyclic, that is, $(V,D)$ is a directed acyclic
graph.
With each directed edge $i \to j$, we associate a variable $\lambda_{i,j}$ and with
every bidirected edge $u \leftrightarrow v$ a variable $\omega_{u,v}$.
Let $\Lambda$ and $\Omega$ be the corresponding $(n+1) \times (n+1)$-matrices.
Since $M$ is acyclic, we can always assume that $\Lambda$ is upper triangular.
The matrix $\Omega$ is symmetric. Let $\sigma_{i,j}$ be the covariance
between the random variables corresponding to nodes $i$ and $j$
and $\Sigma$ be the corresponding matrix. It is well known that
the $\sigma_{i,j}$ are polynomials in the parameters $\lambda_{i,j}$
and $\omega_{i,j}$, see (\ref{eqn:SEM}). 
(Because $\Lambda$ is upper triangular, the entries 
of $(I - \Lambda)^{-1}$ are indeed polynomials.)
Since the $\sigma_{i,j}$ are polynomials in the $\lambda_{i,j}$
and $\omega_{i,j}$, they are not independent variables
but might fulfill polynomial relations, which we will explore.

For a given mixed graph $M$, the \emph{identification problem}
asks to express the parameters $\lambda_{i,j}$ in terms
of the $\sigma_{i,j}$. We consider the generic version of the problem,
that is, the expressions that we provide for the $\lambda_{i,j}$ have
only to be defined on an open set. A parameter $\lambda_{i,j}$ is
generically identifiable if there is only one solution
for $\lambda_{i,j}$, $k$-identifiable, if there are $k$ different
solutions, and unidentifiable, if there are an infinite number of solutions.

An important tool is Wright's trek rule \citep{wright1921correlation, wright1934method}:
A \emph{trek} in $M$ between two nodes $i$ and $j$ is a path (with edges from $D$ and $B$)
such that no two arrow heads collide, that is, it is of the form
$i \leftarrow i_1 \leftarrow \dots \leftarrow u \leftrightarrow v \to \dots \to j_1 \to j$
or $i \leftarrow i_1 \leftarrow \dots \leftarrow u  \to \dots \to j_1 \to j$.
The part containing the $i$-nodes is the left part of the trek,
the part containing the $j$-nodes is the right part of the trek.
With each trek $\tau$ with associate a monomial $M(\tau)$,
which is the product of the edge labels. In the second case, 
we also multiply the monomial with $\omega_{u,u}$. Wright's rule says that
the covariances satisfy $\sigma_{i,j} = \sum_{\tau} M(\tau)$, where the sum is taken over
all treks between $i$ and $j$.

\newcommand{\miss}[1]{#1_\mathrm{miss}}

\subsection{Results by Van der Zander et al.}

For a mixed graph $M = (V,D,B)$, the \emph{missing edge graph} is the undirected
graph $\miss M = (V, \bar B)$. 
(Here, $\bar B$ denotes the complement of $B$.)
An edge in $\miss M$ is called a missing edge.
\citet{halftrek2012} show that the missing edges
are crucial for identification. More precisely, in the case of tree-shaped mixed
graphs with $0 \in V$ being the root,
\citet[Lemma 5]{DBLP:conf/aistats/ZanderWBL22} prove that a parameter $\lambda_{x,y}$
is generically identifiable ($k$-identifiable) iff the system of equations
\begin{align}
\lambda_{p,i} \lambda_{q,j} \sigma_{p,q} - \lambda_{p,i} \sigma_{p,j}
  - \lambda_{q,j} \sigma_{i,q} + \sigma_{i,j} = 0  \label{eq:zander1} \\
\lambda_{p,i} \sigma_{0,p} - \sigma_{0,i} = 0 \label{eq:zander2}   
\end{align}
has a unique solution for $\lambda_{x,y}$ ($k$ solutions, respectively). 
Above, there is an equation for 
each missing edge $i \leftrightarrow j$ not involving the root and each missing
edge $0 \leftrightarrow i$.
In each equation, $p$ denotes
the unique parent of $i$ and $q$ the unique parent of $j$ in the
tree $(V,D)$. We will use this naming convention frequently in the following.
If there is a missing edge $0 \leftrightarrow i$,
then $\lambda_{p,i}$ is generically identifiable as
$\lambda_{p,i} = \sigma_{0,i}/\sigma_{0,p}$.

\subsection{Polynomial Identity Testing}

We consider multivariate polynomials in indeterminates $x_1,\dots,x_n$ over some field. 
An \emph{arithmetic circuit} is an acyclic graph. The nodes with indegree $0$ (input gates) are labelled with variables
or constants from a given field. The inner nodes are either labelled with $+$ (addition gate)
or $*$ (multiplication gate). There is one unique node with outdegree $0$, the output gate.
An arithmetic circuit computes a polynomial at the output gate in the natural way,
see \citet{ramprasad} for more background.
Such arithmetic circuits occur in our setting for instance as iterated matrix products
or determinants with polynomials as entries. 

Given a polynomial computed by an arithmetic circuit, we want
to know whether it is identically zero. This problem is known as 
\emph{polynomial identity testing (PIT)}. Note that we cannot check this efficiently 
by simply computing
the polynomial at the output gate explicitly, since it can have an exponential number
of monomials. 
The Schwartz-Zippel lemma provides an efficient randomized solution for PIT.

\begin{lemma}[\citealt{pipSchwartz1980Zippel,pipZippel1979Schwartz}] \label{lem:sz}
Let $p(x_1,...,x_n)$ be a non-zero polynomial of total degree $\leq d$ over a field $K$. 
Let $S\subseteq K$ be a finite set and let $a_1,...,a_n \in S$ be selected  uniformly 
at random.
Then $\Pr(p(a_1,a_2,...,a_n)\not= 0))\geq 1- \frac{d}{|S|}$.
\end{lemma}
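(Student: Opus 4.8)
The plan is to proceed by induction on the number of variables $n$, building on the single fundamental fact that a nonzero univariate polynomial of degree $\leq d$ over a field has at most $d$ roots. It is cleaner to bound the complementary event and instead prove $\Pr(p(a_1,\dots,a_n) = 0) \leq d/|S|$; the stated inequality then follows immediately. For the base case $n = 1$, I would invoke the root bound directly: a nonzero polynomial of degree $\leq d$ in one variable vanishes on at most $d$ of the $|S|$ candidate points, so the zero probability is at most $d/|S|$.

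For the inductive step I would isolate one variable, say $x_1$, and expand $p$ as a polynomial in $x_1$ with coefficients in the remaining variables, writing $p(x_1,\dots,x_n) = \sum_{i=0}^{k} x_1^i\, c_i(x_2,\dots,x_n)$, where $k$ is the largest power of $x_1$ actually occurring and $c_k \not\equiv 0$ is its leading coefficient polynomial. The crucial degree bookkeeping is that $\deg c_k \leq d - k$. I would then condition on the randomly chosen values $a_2,\dots,a_n$ and distinguish two events. If $c_k(a_2,\dots,a_n) = 0$, the inductive hypothesis applied to the nonzero $(n-1)$-variable polynomial $c_k$ of total degree $\leq d-k$ bounds the probability of this event by $(d-k)/|S|$. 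Otherwise $p(x_1,a_2,\dots,a_n)$ is a genuine univariate polynomial in $x_1$ of degree exactly $k$, so it vanishes at the independently chosen $a_1$ with probability at most $k/|S|$.

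A union bound then closes the estimate. Writing $A$ for the event $c_k(a_2,\dots,a_n) = 0$ and $Z$ for the event $p(a_1,\dots,a_n) = 0$, I would split $\Pr(Z) = \Pr(Z \cap A) + \Pr(Z \cap \neg A) \leq \Pr(A) + \Pr(Z \mid \neg A)$ and plug in the two bounds above to obtain $\Pr(Z) \leq (d-k)/|S| + k/|S| = d/|S|$, as required.

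The step I expect to need the most care is the conditioning argument: one must exploit that $a_1$ is drawn independently of $a_2,\dots,a_n$, so that conditioned on $c_k(a_2,\dots,a_n) \neq 0$ the residual univariate root bound genuinely applies to a uniformly random $a_1 \in S$. The only other delicate point is the degree accounting $\deg c_k \leq d - k$, which must be tracked precisely so that the two contributions sum to exactly $d/|S|$ rather than something weaker.
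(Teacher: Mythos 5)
Your proof is correct and is precisely the classical induction-on-variables argument of Schwartz and Zippel; the paper itself states Lemma~\ref{lem:sz} as a known result with citations and gives no proof of its own, so your argument matches the intended (cited) proof rather than diverging from anything in the paper. Your two delicate points are handled correctly: the bound $\deg c_k \le d-k$ follows since $x_1^k c_k$ contributes total degree at most $d$, and the split $\Pr(Z) \le \Pr(A) + \Pr(Z \mid \neg A)$ together with the independence of $a_1$ from $a_2,\dots,a_n$ gives exactly $(d-k)/|S| + k/|S| = d/|S|$.
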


Instead of evaluating
the given arithmetic circuit explicitly, we first plug in random numbers for the variables.
When we then evaluate the circuit, there is no combinatorial explosion any more and
it can be done efficiently in polynomial time
in the degree $d$ and the circuit size $s$. Both will be polynomially bounded
in our algorithms.
It is a major open problem
in complexity theory whether the Schwartz-Zippel lemma can be derandomized.

The error probability
introduced by the Schwartz-Zippel lemma can be easily controlled. Assume that
the final algorithm calls the Schwartz-Zippel lemma $T$ times. Then we choose a set $S$
of size $d T / \epsilon$. By the union bound,
the overall error probability is $\le T \cdot \frac{d}{|S|} = \epsilon$.

Besides polynomials, we will also consider \emph{fractional affine square-root terms
of polynomials (FASTP)}, as defined by \citet{DBLP:conf/aistats/ZanderWBL22}. 
A FASTP is an expression of the form $\frac{p + q \sqrt{s}}{r + t \sqrt{s}}$,
where $p,q,r,t,s$ are polynomials given by arithmetic circuits. 
\Citet[Lemma 2]{DBLP:conf/aistats/ZanderWBL22} prove that we can test
whether a FASTP is the root of a quadratic polynomial with polynomials
as coefficients in randomized polynomial time by reduction to Lemma~\ref{lem:sz}.

\newcommand{\proofapp}{\hfill $\vartriangleright$}

\section{The Rank of Edges}

Let $\miss M = (V,\bar B)$ denote the graph of missing bidirected edges.
Let $i \leftrightarrow j \in \bar B$ and consider the corresponding
equation (\ref{eq:zander1}).
We write the coefficients in this equations as a $2 \times 2$-matrix 
$\left(\begin{smallmatrix} \sigma_{p,q} & \sigma_{i,q} \\ 
\sigma_{p,j} & \sigma_{i,j} \end{smallmatrix}\right)$.

\begin{definition}
The \emph{rank} of a missing bidirected edge is the rank of the corresponding
$2 \times 2$-matrix.
\end{definition} 

Recall that the  $\sigma_{x,y}$ are polynomials in the $\lambda_{x,y}$ and $\omega_{x,y}$.
Above, by rank, we mean the rank over the corresponding rational function field.

It turns out that rank-1 and rank-2 edges provide different kind of equations.
(We can ignore potential rank-0 edges, since they only give trivial equations.)
We will show in the subsequent sections
that rank-1 edges uniquely identify one of the two parameters, but need not
tell anything about the other. Rank-2 edges do not identify any of the parameters directly
but can be used to transfer known values for one of the parameters to the other parameter 
and vice versa.

\subsection{Rank-1 Edges}

If a missing edge $i \leftrightarrow j$
has rank 1, then the left-hand side of the corresponding equation~(\ref{eq:zander1})
factorizes into two linear terms. 

\begin{lemma} \label{lem:rank1}
$\det \left( \begin{smallmatrix} \sigma_{p,q} & \sigma_{i,q} \\ 
\sigma_{p,j} & \sigma_{i,j} \end{smallmatrix}\right) = 0$ if and only if
$\lambda_{p,i} \lambda_{q,j} \sigma_{p,q} - \lambda_{p,i} \sigma_{p,j}
  - \lambda_{q,j} \sigma_{i,q} + \sigma_{i,j}$ factorizes
  (over the rational function field).
\end{lemma}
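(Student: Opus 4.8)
The plan is to read the left-hand side of (\ref{eq:zander1}) as a bilinear polynomial in the two unknowns $\lambda_{p,i}$ and $\lambda_{q,j}$, with coefficients in the rational function field $K$ of the structural parameters. Writing
\[
f \;=\; \sigma_{p,q}\,\lambda_{p,i}\lambda_{q,j} \;-\; \sigma_{p,j}\,\lambda_{p,i} \;-\; \sigma_{i,q}\,\lambda_{q,j} \;+\; \sigma_{i,j},
\]
the lemma becomes the classical degeneracy criterion for a $2\times 2$ bilinear form: such a form $a\,XY+b\,X+c\,Y+d$ factors into two affine-linear terms exactly when $ad-bc=0$, and here $ad-bc$ is precisely the determinant in the statement.

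The crux is the single ``complete the rectangle'' identity, verified by expansion,
\[
\sigma_{p,q}\cdot f \;=\; \bigl(\sigma_{p,q}\lambda_{p,i}-\sigma_{i,q}\bigr)\bigl(\sigma_{p,q}\lambda_{q,j}-\sigma_{p,j}\bigr) \;+\; \det\!\left(\begin{smallmatrix}\sigma_{p,q}&\sigma_{i,q}\\\sigma_{p,j}&\sigma_{i,j}\end{smallmatrix}\right).
\]
Before using it I would note that $\sigma_{p,q}$ is a nonzero element of $K$: by Wright's rule the trek that climbs from $p$ to the nearest common ancestor $a$ of $p$ and $q$ (contributing the factor $\omega_{a,a}$) and descends to $q$ yields a monomial, and since Wright's rule adds all trek monomials with coefficient $+1$ there is no cancellation, so $\sigma_{p,q}$ is a nonzero polynomial and hence invertible in $K$. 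The direction $\det=0 \Rightarrow$ factorization is then immediate: the identity collapses to $f=\sigma_{p,q}^{-1}(\sigma_{p,q}\lambda_{p,i}-\sigma_{i,q})(\sigma_{p,q}\lambda_{q,j}-\sigma_{p,j})$, a product of two linear terms already over $K$ (no field extension is needed, which is what the identification of a rank-$1$ edge later exploits).

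For the converse I would use the shape of the factors. If $f=L_1L_2$ with $L_1,L_2$ of positive degree, then $f$ having total degree $2$ forces both factors to be affine-linear; and since $f$ carries no $\lambda_{p,i}^2$ and no $\lambda_{q,j}^2$ term while its $\lambda_{p,i}\lambda_{q,j}$-coefficient $\sigma_{p,q}$ is nonzero, one factor must have the form $\alpha\lambda_{p,i}+\beta$ and the other $\gamma\lambda_{q,j}+\delta$. Comparing coefficients gives $\alpha\gamma=\sigma_{p,q}$, $\alpha\delta=-\sigma_{p,j}$, $\beta\gamma=-\sigma_{i,q}$, $\beta\delta=\sigma_{i,j}$, whence the determinant equals $(\alpha\gamma)(\beta\delta)-(\alpha\delta)(\beta\gamma)=0$.

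I do not expect a serious obstacle: the statement is an instance of the degeneracy criterion for a bilinear form (equivalently, the affine conic $f=0$ being a pair of lines). The only points that need care are making precise what ``factorizes over the rational function field'' means --- the polynomial variables are $\lambda_{p,i},\lambda_{q,j}$ and the coefficients lie in $K$ --- and checking $\sigma_{p,q}\neq 0$ so that the rectangle identity can be divided through; both are dispatched as above.
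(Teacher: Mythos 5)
Your proof is correct, but it takes a genuinely different route from the paper's. The paper's forward direction never divides by $\sigma_{p,q}$: from $\det = 0$ it picks dependency coefficients $\alpha,\beta$, not both zero, with $\alpha\sigma_{p,q} = \beta\sigma_{i,q}$ and $\alpha\sigma_{p,j} = \beta\sigma_{i,j}$, and factors directly as $(\lambda_{p,i} - \tfrac{\alpha}{\beta})(\lambda_{q,j}\sigma_{p,q} - \sigma_{p,j})$ (with the symmetric case when $\beta = 0$); for the converse it reads a factorization $\sigma_{p,q}(\lambda_{p,i}-a)(\lambda_{q,j}-b)$ as a rank-one outer product $\sigma_{p,q}\left(\begin{smallmatrix}1\\ b\end{smallmatrix}\right)\left(\begin{smallmatrix}1 & a\end{smallmatrix}\right)$ of the coefficient matrix. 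You instead rely on the rectangle identity $\sigma_{p,q}f = (\sigma_{p,q}\lambda_{p,i}-\sigma_{i,q})(\sigma_{p,q}\lambda_{q,j}-\sigma_{p,j}) + \det\left(\begin{smallmatrix}\sigma_{p,q}&\sigma_{i,q}\\ \sigma_{p,j}&\sigma_{i,j}\end{smallmatrix}\right)$, which obliges you to verify $\sigma_{p,q}\neq 0$; your Wright's-rule argument for this is sound, since in a rooted tree $p$ and $q$ are always trek-connected through their nearest common ancestor and all trek monomials carry coefficient $+1$, so no cancellation is possible. What each buys: the paper's $\alpha,\beta$ device is insensitive to whether $\sigma_{p,q}$ vanishes, so it needs no such side argument (and would survive in graph classes where $\sigma_{p,q}$ could be the zero polynomial), whereas your identity produces the explicit factors, hence the candidate values $\lambda_{p,i}=\sigma_{i,q}/\sigma_{p,q}$ and $\lambda_{q,j}=\sigma_{p,j}/\sigma_{p,q}$, for free --- exactly the expressions the rank-1 machinery later exploits --- and it also dodges the degenerate failure of the naive criterion ``$ad-bc=0$ iff the bilinear form factors'' when the $XY$-coefficient vanishes. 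On the converse your treatment is in fact tighter than the paper's: the paper simply posits that a factorization has the split-variable shape $\sigma_{p,q}(\lambda_{p,i}-a)(\lambda_{q,j}-b)$, while your degree count, together with the absence of $\lambda_{p,i}^2$ and $\lambda_{q,j}^2$ terms and $\sigma_{p,q}\neq 0$, proves that any nontrivial factorization must have this shape before you compare coefficients.
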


\begin{proof} %
Let $\alpha$ and $\beta$ be such that $\alpha \sigma_{p,q} = \beta \sigma_{i,q}$
and $\alpha  \sigma_{p,j}  = \beta \sigma_{i,j}$. Since the determinant
is zero, at least one of $\alpha$ and $\beta$ can be chosen nonzero. 
Assume it is $\beta$, the other case is symmetric. We have:
\begin{align*}
\lambda_{p,i} \lambda_{q,j} \sigma_{p,q} - \lambda_{p,i} \sigma_{p,j}
  - \lambda_{q,j} \sigma_{i,q} + \sigma_{i,j}
 & \quad = \lambda_{p,i} (\lambda_{q,j} \sigma_{p,q} - \sigma_{p,j}) - \lambda_{q,j} \sigma_{i,q} + \sigma_{i,j} \\
 & \quad = (\lambda_{p,i} - \frac{\alpha}{\beta})  (\lambda_{q,j} \sigma_{p,q} - \sigma_{p,j}).
\end{align*}
The converse of the statement is also true, since any factorisation provides
a rank-1-decomposition of the matrix. That is,
when
\[
\lambda_{p,i} \lambda_{q,j} \sigma_{p,q} - \lambda_{p,i} \sigma_{p,j}
  - \lambda_{q,j} \sigma_{i,q} + \sigma_{i,j}
     = \sigma_{p,q}(\lambda_{p,i} - a )(\lambda_{q,j} -b)
\]
with rational functions $a$ and $b$, then
\[
\left( \begin{matrix} \sigma_{p,q} & \sigma_{i,q} \\ 
\sigma_{p,j} & \sigma_{i,j} \end{matrix}\right) =
\sigma_{p,q} \left(\begin{matrix} 1 \\ b \end{matrix}  \right)
\left(\begin{matrix} 1  & a \end{matrix}  \right).
\]
This completes the proof.
\end{proof}

\subsection{Rank-2 Edges}

If a missing edge has rank 2, then
$\lambda_{q,j} \sigma_{p,q} - \sigma_{p,j} \not= 0$ as a polynomial. If it 
were identically $0$,
then by plugging this into (\ref{eq:zander1}), we get that
$\lambda_{q,j} \sigma_{i,q} - \sigma_{i,j} = 0$, too. This is however
a contradiction, since it yields a linear dependence between the two columns of
$\left( \begin{smallmatrix} \sigma_{p,q} & \sigma_{i,q} \\ 
\sigma_{p,j} & \sigma_{i,j} \end{smallmatrix}\right)$. 
Thus,  we can write
   $\lambda_{p,i} = \frac{\lambda_{q,j} \sigma_{i,q} - \sigma_{i,j}}{\lambda_{q,j} \sigma_{p,q} - \sigma_{p,j}}$
in the rank-2 case.
The same is true when we exchange the roles of $\lambda_{p,i}$ and $\lambda_{q,j}$.
This shows the following:

\begin{lemma} \label{lemma:id:rank2}
Let $i \leftrightarrow j$ be a missing edge of rank two. Then $\lambda_{p,i}$ is generically identifiable
(2-identifiable, unidentifiable)
if and only if $\lambda_{q,j}$ is generically identifiable (2-identifiable, unidentifiable, respectively). \qed
\end{lemma}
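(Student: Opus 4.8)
The plan is to convert the two rational expressions derived just above into an explicit bijection between the solution values of $\lambda_{p,i}$ and those of $\lambda_{q,j}$, and then read off the three cases. First I would note that equation~(\ref{eq:zander1}) is bilinear in the pair $(\lambda_{p,i},\lambda_{q,j})$, so solving for either variable expresses it as a fractional-linear (M\"obius) function of the other: writing $y=\lambda_{q,j}$, we have $\lambda_{p,i}=\phi(y):=\frac{\sigma_{i,q}\,y-\sigma_{i,j}}{\sigma_{p,q}\,y-\sigma_{p,j}}$, and symmetrically $\lambda_{q,j}=\psi(\lambda_{p,i})$ with $\psi(x)=\frac{\sigma_{p,j}\,x-\sigma_{i,j}}{\sigma_{p,q}\,x-\sigma_{i,q}}$. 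The coefficient matrix of $\phi$ is $\left(\begin{smallmatrix}\sigma_{i,q}&-\sigma_{i,j}\\\sigma_{p,q}&-\sigma_{p,j}\end{smallmatrix}\right)$, whose determinant is $\sigma_{p,q}\sigma_{i,j}-\sigma_{i,q}\sigma_{p,j}=\det\left(\begin{smallmatrix}\sigma_{p,q}&\sigma_{i,q}\\\sigma_{p,j}&\sigma_{i,j}\end{smallmatrix}\right)$, which is nonzero by the rank-two hypothesis. A short computation then verifies $\psi=\phi^{-1}$, so $\phi$ and $\psi$ are mutually inverse M\"obius transformations.

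Next I would invoke \citet[Lemma~5]{DBLP:conf/aistats/ZanderWBL22}: identifiability, $k$-identifiability, and unidentifiability of a parameter are governed by the number of solutions for that parameter in the full system~(\ref{eq:zander1})--(\ref{eq:zander2}). Since the equation~(\ref{eq:zander1}) for the edge $i\leftrightarrow j$ is one of the equations of this system, every solution satisfies it, so at every solution the value of $\lambda_{p,i}$ equals $\phi$ applied to the value of $\lambda_{q,j}$, and conversely. As $\phi$ is a single fixed bijection (the $\sigma$'s are the given constants), it maps the set of attained $\lambda_{q,j}$-values bijectively onto the set of attained $\lambda_{p,i}$-values; hence these two sets have equal cardinality. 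Therefore one of them is a singleton, has exactly $k$ elements, or is infinite precisely when the other does, which is exactly the claimed equivalence.

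The one point requiring care---and the step I expect to be the main obstacle---is that $\phi$ is a priori only a partial function, undefined at its pole $y=\sigma_{p,j}/\sigma_{p,q}$. I would rule this pole out on the solution set: if some solution had $\lambda_{q,j}\sigma_{p,q}-\sigma_{p,j}=0$, then substituting into~(\ref{eq:zander1}) would force $\lambda_{q,j}\sigma_{i,q}-\sigma_{i,j}=0$ as well, and the two relations together give $\sigma_{p,q}\sigma_{i,j}=\sigma_{i,q}\sigma_{p,j}$, i.e.\ a vanishing rank determinant, contradicting rank two. The symmetric argument excludes the pole of $\psi$. Thus on the solution set both $\phi$ and $\psi$ are genuine, everywhere-defined, mutually inverse functions, so the correspondence really is a bijection. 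Since we argue generically, it suffices that the rank determinant is a nonzero polynomial in the true parameters, which makes it nonzero on a dense open set on which the entire argument goes through.
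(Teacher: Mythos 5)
Your proposal is correct and takes essentially the same approach as the paper: the paper's (terse) proof is exactly the derivation preceding the lemma, which solves the bilinear equation~(\ref{eq:zander1}) for each of $\lambda_{p,i}$ and $\lambda_{q,j}$ as a fractional-linear (M\"obius) function of the other, using the rank-2 hypothesis to rule out an identically vanishing denominator. Your additional checks---that the determinant of the M\"obius matrix equals the rank determinant, that $\psi=\phi^{-1}$, and that the poles are excluded on the solution set---are correct elaborations of that same argument rather than a different route.
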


\begin{remark} \label{remark:FASTP}
If $\lambda_{q,j}$ is determined by a rational expression, so is $\lambda_{p,i}$.
If $\lambda_{q,j}$ is determined by a FASTP, so is $\lambda_{p,i}$. 
Moreover, the square root term is the same for $\lambda_{p,i}$ and $\lambda_{q,j}$.
\end{remark}

\begin{proof} %
The first item is obvious. For the second item, assume that
$\lambda_{q,j} = \frac{u + v \sqrt{s}}{s + t \sqrt{s}}$. Then
\begin{align*}
   \lambda_{p,i} & = \frac{\frac{u + v \sqrt{s}}{r + t \sqrt{s}} \sigma_{i,q} - \sigma_{i,j}}{\frac{u + v \sqrt{s}}{r + t \sqrt{s}} \sigma_{p,q} - \sigma_{p,j}} \\
    & = \frac{(u \sigma_{i,q} - r \sigma_{i,j}) +  (v \sigma_{i,q} - t \sigma_{i,j}) \sqrt{s}}
             {(u \sigma_{p,q} - r \sigma_{p,j}) +  (v \sigma_{p,q} - t \sigma_{p,j}) \sqrt{s}}.
\end{align*}
This is a FASTP with the same square root term $\sqrt{s}$.
\end{proof}

\subsection{Computing the Rank of an Edge}

How do we check whether a matrix of the form $\left( \begin{smallmatrix} \sigma_{p,q} & \sigma_{i,q} \\ 
\sigma_{p,j} & \sigma_{i,j} \end{smallmatrix}\right)$ has rank 1 or 2? We can
compute the entries of this matrix by computing 
$(I - \Lambda)^{-1} \Omega (1 - \Lambda)^{-T}$ and then compute the determinant,
which will be a polynomial in the $\lambda_{s,t}$ and $\omega_{s,t}$. This can
be done using a polynomial-sized arithmetic circuit. However, we cannot compute this polynomial explicitly,
since intermediate results might have an exponential number of monomials.

Luckily, this is an instantiation of the polynomial identity testing problem,
and by Lemma~\ref{lem:sz},
it is sufficient to replace the indeterminates by random numbers chosen from a large enough set.
The resulting determinant can be computed in polynomial time, since the numbers are all small.
We get:

\begin{lemma} \label{lem:rank:edge}
Given a mixed graph $M = (V,B,D)$ and a missing edge $i \leftrightarrow j$, 
there is a randomized polynomial time algorithm that determines
the rank of $i \leftrightarrow j$. \qed
\end{lemma}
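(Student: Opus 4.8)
The plan is to reduce the rank computation to two applications of polynomial identity testing. Recall that the rank of the matrix $\left(\begin{smallmatrix} \sigma_{p,q} & \sigma_{i,q} \\ \sigma_{p,j} & \sigma_{i,j} \end{smallmatrix}\right)$ over the rational function field is $2$ exactly when its determinant $\sigma_{p,q}\sigma_{i,j} - \sigma_{i,q}\sigma_{p,j}$ is a nonzero polynomial in the $\lambda_{s,t}$ and $\omega_{s,t}$; it is $1$ when the determinant vanishes identically but the matrix is not the zero matrix; and it is $0$ when all four entries vanish identically. So the whole task is to decide identical vanishing of a handful of polynomials that are built from the covariance matrix entries.

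First I would exhibit a polynomial-size arithmetic circuit for each relevant quantity. By equation~(\ref{eqn:SEM}), $\Sigma = (I-\Lambda)^{-1}\Omega(I-\Lambda)^{-T}$, and because $\Lambda$ is upper triangular, $(I-\Lambda)^{-1} = \sum_{k\ge 0}\Lambda^k$ truncates after finitely many terms and has polynomial entries. Iterated matrix products are computable by circuits of size polynomial in $n$, so each entry $\sigma_{x,y}$ is computed by a polynomial-size circuit. From these I build a circuit for the determinant $\sigma_{p,q}\sigma_{i,j} - \sigma_{i,q}\sigma_{p,j}$ by two multiplications and one subtraction; its total degree is bounded by a polynomial in $n$ (each $\sigma_{x,y}$ has degree at most linear in $n$, so the determinant has degree at most linear in $n$ as well).

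Next I would run the Schwartz--Zippel test (Lemma~\ref{lem:sz}) on this determinant circuit: choose a set $S \subseteq K$ with $|S|$ large enough relative to the degree bound $d$ and the desired error $\epsilon$, sample a random assignment to all indeterminates, and evaluate the circuit at that point, which takes time polynomial in the circuit size and the bit-length of the chosen numbers. If the evaluation is nonzero, we declare rank $2$; this is always correct. If it evaluates to zero, then with high probability the determinant is identically zero, and we conclude rank $\le 1$. To separate rank $1$ from rank $0$ we apply the same test to the four entries $\sigma_{p,q}, \sigma_{i,q}, \sigma_{p,j}, \sigma_{i,j}$ individually (or equivalently test whether at least one entry is a nonzero polynomial): if some entry is nonzero we report rank $1$, otherwise rank $0$. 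The total number of PIT calls is a constant, so the error-control argument from the preliminaries applies with a correspondingly enlarged $S$.

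The only subtlety — which is really the main obstacle to keep honest — is the bound on the evaluation cost: when we plug random integers into the circuit, the intermediate values computed by a circuit of size $s$ and degree $d$ stay of bit-length polynomial in $s$, $d$, and $\log|S|$, so the arithmetic remains efficient and no combinatorial blow-up in the number of monomials ever occurs. This is precisely the point already emphasized in the discussion following Lemma~\ref{lem:sz}. With that observation in hand, the procedure runs in randomized polynomial time and correctly outputs the rank with probability at least $1-\epsilon$, which establishes the lemma.
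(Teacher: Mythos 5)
Your proposal is correct and follows essentially the same route as the paper: build a polynomial-size arithmetic circuit for the entries of $\Sigma$ via $(I-\Lambda)^{-1}\Omega(I-\Lambda)^{-T}$, observe that testing the $2\times 2$ determinant for identical vanishing is an instance of PIT, and apply the Schwartz--Zippel lemma by evaluating the circuit at random points. The details you add beyond the paper's brief argument (the explicit degree bound, the rank-$0$ versus rank-$1$ separation by testing the four entries, and the bit-length bound on intermediate values) are all sound for this circuit and only make explicit what the paper leaves implicit.
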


\section{Composition and Elimination}

\Citet{DBLP:conf/aistats/ZanderWBL22} show that each missing edge 
not involving the root $0$
gives 
a bilinear equation
in $\lambda_{p,i}$ and $\lambda_{q,j}$, see (\ref{eq:zander1}).
In the following three sections, we
study this problem in a more abstract setting:
We are given a set of indeterminates $X$ and a set of bilinear equations 
such that each equation contains exactly two indeterminates. For each
pair of variables, there is at most one such equations. Therefore,
we can view the problem as an undirected graph $G$ with node set $X$
and there is an edge connecting two nodes iff there is an equation
involving these two variables. We will assume that $G$ is connected,
since if not, we can treat each connected component separately.

Let 
\begin{align}
    axy - bx + cy - d & = 0, \label{eq:res:1}\\
    Ayz - By + Cz - D & = 0  \label{eq:res:2}
\end{align}
be two such bilinear equations, sharing the indeterminate $y$. If we want to eliminate $y$,
we can write (assuming all denominators are nonzero)
\[
    y = \frac{bx + d}{ax + c}, \quad z = \frac{By + D}{Ay + C}
\]
and obtain by plugging the first equation into the second
\begin{align} 
   z %
     & = \frac{(Bb + Da)x + (Bd + Dc)}{(Ab + Ca)x + (Ad + Cc)}. \label{eq:z:1}
\end{align}
The new coefficients are given by the entries of the matrix product
  $\left(\begin{smallmatrix}
     B & D \\
     A & C
  \end{smallmatrix} \right) 
  \left(\begin{smallmatrix}
     b & d \\
     a & c 
  \end{smallmatrix} \right) =
  \left(\begin{smallmatrix}
     Bb + Da & Bd + Dc \\
     Ab + Ca & Ad + Cc
  \end{smallmatrix} \right)$.
This is the well-known composition of M\"obius transforms, see e.g.\ \citet{moebius}.
The definition of a M\"obius transform requires that the $2 \times 2$-matrix
is invertible, that is, we have a rank-2 edge.

We can get a meaningful interpretation even in the case of rank-1 edges if 
we use the language of resultants \citep{resultant, CJO:ideals}, a standard tool for
solving polynomial equations. 
Write both equation as polynomial equations in the variable $y$ with coefficients in
$R[x, z]$:
\begin{align*}
    f(y) = (ax + c) y - (bx + d) & = 0, \\
    g(y) = (Az - B) y + (Cz - D) & = 0 .
\end{align*}
The resultant w.r.t.~$y$ of the two polynomials is
\begin{align*}
   \Res_y(f,g) & = \det \left(\begin{array}{cc}
                       - (bx + d) & ax + c \\
                           Cz - D & Az - B 
                      \end{array}\right) \\
      & = - (Ab + Ca) xz + (Bb + Da)x \\
      & \qquad - (Ad + Cc)z  + Bd + Dc. 
\end{align*}
The fact that $\Res_y(f,g) = 0$ can be rewritten as
\[
   (Ab + Ca) xz + (Ad + Cc)z = (Bb + Da)x + Bd + Dc,
\]
which is equivalent to (\ref{eq:z:1}).

When eliminating the variable $y$ in (\ref{eq:res:1}) and (\ref{eq:res:2})
we implicitly assume a ``direction'' in the process, the starting variable is $x$
and we want to express $z$ in terms of $x$.
We can also do the opposite. In this case, we write
\[
    y = \frac{Cz - D}{- Az + B}, \qquad x = \frac{cy - d}{- ay + b}
\]
and get
\begin{align*}
   x %
     & = \frac{(cC + dA)z - (cD + dB)}{-(aC + bA) z + (aD + bB)} .
\end{align*}
This is nothing but the product of the so-called adjoint matrices:
\[ \left(\begin{matrix}
     c & -d  \\
     -a & b 
  \end{matrix} \right) 
   \left(\begin{matrix}
     C & - D \\
     -A & B
  \end{matrix} \right) =
  \left(\begin{matrix}
     cC + dA & -(cD + dB) \\
     -(aC + bA) & aD + bB 
  \end{matrix} \right).
\]  
The corresponding equation is
\[
   -(aC + bA) zx - (cC + dA)z + (aD + bB) x + (cD + dB) = 0,
\]
which is equivalent to  (\ref{eq:z:1}), as expected. 
Of course, this is also equivalent to the fact that $\Res_y(g,f)$ vanishes,
as $\Res_y(g,f) = - \Res_y(f,g)$.

This suggest the following modelling as an edge labeled directed graph $\vec G$. The node set is again
the set of variables $X$. Assume there is an equation involving $x$ and $y$ like in (\ref{eq:res:1}).
We add a directed edge $(x,y)$ with weight $\left(\begin{smallmatrix} b & d \\ a & c \end{smallmatrix}\right)$
and the reverse edge $(y, x)$ with weight $\left(\begin{smallmatrix} c & -d \\ - a & b \end{smallmatrix}\right)$.
Call the resulting set of directed edges $E$. For an edge $e \in E$, let $M_e$ denote the
corresponding weight. The corresponding polynomial is called $f_e$. Note that the polynomial
is the same irrespective of the direction of the edges, but the weights are different, they
are the adjoint matrices of each other. Figure~\ref{fig:SCM:ex} shows an example
of such a graph $\vec G$.

\begin{figure}
\centering
\begin{tikzpicture}[scale=0.65]
    \node[circle, minimum size = 0.8cm, draw] (y) at (0,0) {$y$};
    \node[circle, minimum size = 0.8cm, draw] (x) at (5,3) {$x$};
    \node[circle, minimum size = 0.8cm, draw] (u) at (10,0) {$u$};
    \node[circle, minimum size = 0.8cm, draw] (z) at (5,-3) {$z$};
    \draw [->] (z) edge[bend left = 15] node [midway,left] {$M_{(z,x)} = \left(\begin{smallmatrix} 1 & 0 \\ 0 & 1 \\
\end{smallmatrix}\right)$} (x);
    \draw [->] (x) edge[bend left = 15] (z);
    \draw [->] (x) edge [bend right=17] node [midway, above, xshift = -0.9cm, yshift  = 0.1cm]{$M_{(x,y)} = \left(\begin{smallmatrix} 1 & 0 \\ 2 & 1 \\
\end{smallmatrix}\right)$}(y);
    \draw [->] (y) edge [bend right=10] (x);
    \draw [->] (x) edge [bend left=17] node [midway,above, xshift = 0.7cm, yshift = 0.1cm]{$M_{(x,u)} = \left(\begin{smallmatrix} 1 & 2 \\ 2 & 1 \\
\end{smallmatrix}\right)$}(u);
    \draw [->] (u) edge[bend left=10] (x);
    \draw [->] (y) edge [bend right=17] node [midway,below, xshift = -0.9cm, yshift = -0.1cm]{$M_{(y,z)} = \left(\begin{smallmatrix} 1 & 0 \\ -2 & 1 \\
\end{smallmatrix}\right)$}(z);
    \draw [->] (z)  edge[bend right=10] (y);
    \draw [->] (u) edge [bend left=17] node [midway,below, xshift = 0.7cm, yshift = -0.1cm]{$M_{(u,z)} = \left(\begin{smallmatrix} 1 & -1 \\ 0 & 1 \\
\end{smallmatrix}\right)$}(z);
    \draw [->] (z) edge[bend left=10] (u);
  \end{tikzpicture}
\caption{An example of the graph $\vec G$. For simplicity, we show only one weight
for each pair of edges between two nodes. The other edge has the corresponding adjoint matrix as weight.
\label{fig:SCM:ex}}
\end{figure}
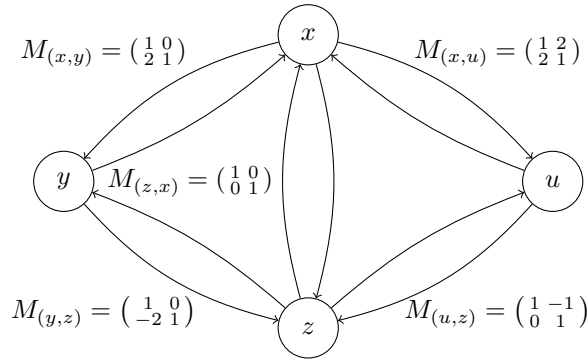

\begin{lemma} \label{lem:res}
Let $x_1,\dots,x_t$ be a simple directed path in $\vec G$. Then the coefficients of
\[
  \Res_{x_{t-1}}( \dots \Res_{x_3}(\Res_{x_2}(f_{(x_1,x_2)}, f_{(x_2,x_3)}), f_{(x_3,x_4)}),\dots, f_{(x_{t-1},x_t)}).
\]
are given by $M_{(x_{t-1},x_t)} \cdots M_{(x_2,x_3)} \cdot M_{(x_1,x_2)}$.
\end{lemma}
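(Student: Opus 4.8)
The plan is to induct on the length $t$ of the path, using the single-step elimination carried out above (the derivation of (\ref{eq:z:1}) via $\Res_y(f,g)$) as both the base case and the engine of the induction. The invariant I would maintain is that every intermediate iterated resultant is again a \emph{bilinear} polynomial in the two surviving endpoint variables, whose coefficients are encoded exactly by the $2\times 2$ matrix in the convention fixed above, namely that an equation $\alpha x y - \beta x + \gamma y - \delta = 0$ in $x$ (first) and $y$ (second) corresponds to $\left(\begin{smallmatrix} \beta & \delta \\ \alpha & \gamma \end{smallmatrix}\right)$. For the base case $t = 2$ there is nothing to eliminate: the iterated resultant is $f_{(x_1,x_2)}$ itself, whose coefficient matrix is $M_{(x_1,x_2)}$ by definition.

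For the inductive step (for $t \ge 3$) I would write $P := \Res_{x_{t-2}}(\dots \Res_{x_2}(f_{(x_1,x_2)}, f_{(x_2,x_3)}),\dots, f_{(x_{t-2},x_{t-1})})$ for the iterated resultant along the subpath $x_1,\dots,x_{t-1}$, so that the full expression in the lemma is exactly $\Res_{x_{t-1}}(P, f_{(x_{t-1},x_t)})$. By the induction hypothesis, $P$ is bilinear in $x_1$ and $x_{t-1}$ with coefficient matrix $N := M_{(x_{t-2},x_{t-1})} \cdots M_{(x_1,x_2)}$. I would then observe that $P$ plays the role of $f$ (with $x = x_1$, $y = x_{t-1}$, matrix $N$) and $f_{(x_{t-1},x_t)}$ the role of $g$ (with $y = x_{t-1}$, $z = x_t$, matrix $M_{(x_{t-1},x_t)}$) in the single-step computation above. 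That computation shows $\Res_{x_{t-1}}(P, f_{(x_{t-1},x_t)})$ is bilinear in $x_1$ and $x_t$ with coefficient matrix $M_{(x_{t-1},x_t)} \cdot N = M_{(x_{t-1},x_t)} M_{(x_{t-2},x_{t-1})} \cdots M_{(x_1,x_2)}$, which is the claimed product. This simultaneously re-establishes the invariant and completes the induction.

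The main point to be careful about is that the single-step resultant formula really applies verbatim at each stage. This needs that $P$ has degree exactly $1$ in the eliminated variable $x_{t-1}$, so that its Sylvester matrix against $f_{(x_{t-1},x_t)}$ is precisely the $2\times 2$ determinant used in the derivation above and no spurious degree drop occurs. Bilinearity of $P$ (part of the invariant) gives the degree-$1$ property, and since we work over the rational function field the leading coefficient of $P$ in $x_{t-1}$ is a nonzero polynomial, so the resultant is the generic $2\times 2$ determinant and does not degenerate. The one genuine bookkeeping hazard is the sign convention of $\Res$ (recall $\Res_y(f,g) = -\Res_y(g,f)$): I would pin this down by reusing exactly the ordering of arguments in the displayed computation of $\Res_y(f,g)$, which fixes the correspondence with left-multiplication by $M_{(x_{t-1},x_t)}$ once and for all. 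Beyond this sign bookkeeping I do not expect any real difficulty, since the substantive fact—that eliminating the shared variable between two bilinear relations multiplies their matrices—has already been established for a single step, and the lemma is just its iteration.
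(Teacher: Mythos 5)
Your induction is exactly the paper's argument made explicit: the paper's proof of Lemma~\ref{lem:res} is a one-line appeal to the single-step equivalence between $\Res_y(f,g)$ and the matrix product $\left(\begin{smallmatrix} B & D \\ A & C \end{smallmatrix}\right)\left(\begin{smallmatrix} b & d \\ a & c \end{smallmatrix}\right)$ established just before the lemma, iterated along the path, which is precisely your inductive step with base case $t=2$. Your additional bookkeeping --- the bilinearity invariant, the degree-$1$ requirement on the eliminated variable, and pinning down the sign via $\Res_y(f,g) = -\Res_y(g,f)$ --- is sound and only makes explicit what the paper leaves implicit.
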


\begin{proof} %
This easily follows from the equivalence between repeated resultants
and matrix products proven before the lemma.
\end{proof}

\section{Identification Using Cycles}

We can identify a variable, if we close the path in Lemma~\ref{lem:res} by adding an edge $(x_t,x_1)$,
obtaining a (simple) cycle. This corresponds to an equation
\[ 
    x_1 = \frac{b x_1 + d}{ax_1 + c}  \qquad \text{or} \qquad
    a x_1^2 + (c - b) x_1 - d = 0
\]
where the coefficients are given by the matrix product
$M_{(x_t,x_1)} \cdot M_{(x_{t-1},x_t)} \cdots M_{(x_2,x_3)} \cdot M_{(x_1,x_2)}$. 

\begin{lemma} \label{lem:cycle:ident}
Let $\vec G$ be as above and let $x_1,\dots,x_t,x_1$ be a simple directed cycle in $\vec G$.
Let $\left(\begin{smallmatrix} b & d \\ a & c \end{smallmatrix}\right)$ be the product
$M_{(x_t,x_1)} \cdot M_{(x_{t-1},x_t)} \cdots M_{(x_2,x_3)} \cdot M_{(x_1,x_2)}$. 
\begin{enumerate}
\item If $a \not= 0$, then $x_1$ has at most two solutions (depending on the discriminant).
\item If $a = 0$ but $c - b \not= 0$, then $x_1$ has exactly one solution.
\item If $a = c- b = 0$ but $d \not= 0$, then there is no solution.
\item If $a = c-b = d = 0$, then $x_1$ has infinitely many solutions. 
\end{enumerate}
\end{lemma}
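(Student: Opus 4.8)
The plan is to reduce the entire statement to an elementary classification of the roots of a single univariate equation, since all the substantive work has already been carried out in Lemma~\ref{lem:res} and in the discussion immediately preceding the lemma. First I would recall how the governing equation arises. By Lemma~\ref{lem:res}, composing the M\"obius transforms along the simple directed path $x_1,\dots,x_t$ produces the matrix $M_{(x_{t-1},x_t)} \cdots M_{(x_1,x_2)}$; prepending the weight $M_{(x_t,x_1)}$ of the closing edge yields the full product $\left(\begin{smallmatrix} b & d \\ a & c \end{smallmatrix}\right)$, and the condition that traversing the whole cycle sends $x_1$ back to itself is precisely the fixed-point equation $x_1 = \frac{b x_1 + d}{a x_1 + c}$. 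Clearing the denominator gives
\[
    a x_1^2 + (c - b)\, x_1 - d = 0,
\]
so that counting solutions for $x_1$ amounts to counting the roots of this equation.

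With the equation in hand, the four cases are read off directly from its leading structure, and they are visibly mutually exclusive and jointly exhaustive, so exactly one applies. If $a \neq 0$ it is a genuine quadratic, i.e.\ a nonzero polynomial of degree two, which has at most two roots over a field; the exact number is governed by the discriminant $(c-b)^2 + 4ad$, whence the bound ``at most two''. If $a = 0$ but $c - b \neq 0$ the equation collapses to the linear equation $(c-b)\,x_1 = d$, with the unique solution $x_1 = d/(c-b)$. If $a = c - b = 0$ but $d \neq 0$ it degenerates to $-d = 0$, which is inconsistent, so there is no solution. Finally, if $a = c - b = d = 0$ the equation is the trivial identity $0 = 0$, satisfied by every value of $x_1$, giving infinitely many solutions.

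The one point that genuinely requires care — and which I expect to be the main (if modest) obstacle — is that clearing the denominator in $x_1 = \frac{b x_1 + d}{a x_1 + c}$ is legitimate only when $a x_1 + c \neq 0$, so a priori a root of the quadratic at which $a x_1 + c = 0$ could be spurious. I would dispatch this by a direct substitution: if $a \neq 0$ and $x_1 = -c/a$ were simultaneously a root of $a x_1^2 + (c-b)x_1 - d$, then plugging in forces $bc - ad = 0$, i.e.\ $\det\left(\begin{smallmatrix} b & d \\ a & c \end{smallmatrix}\right) = 0$. Since the determinant of the product equals the product of the per-edge determinants around the cycle (the adjoint weight of each reverse edge has the same determinant as the forward edge), this degeneracy can occur only if some edge on the cycle has vanishing determinant, i.e.\ is a rank-one edge in the sense of the previous section; when every edge on the cycle has rank two the determinant is a nonzero polynomial and generically $bc - ad \neq 0$, so no root is lost or gained. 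In any event the bound ``at most two'' in the quadratic branch is unaffected, because the valid solutions form a subset of the roots of the quadratic, and the three remaining branches involve no division and are immune to this issue. This completes the case analysis.
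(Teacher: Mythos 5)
Your proof is correct and takes essentially the same route as the paper: both reduce the cycle condition to the fixed-point equation $x_1 = \frac{b x_1 + d}{a x_1 + c}$, clear denominators to get $a x_1^2 + (c-b)x_1 - d = 0$, and read the four cases off this quadratic (the paper's proof is just this case analysis, with the explicit root formulas $\frac{-(c-b) \pm \sqrt{(c-b)^2 + 4ad}}{2a}$ and $\frac{d}{c-b}$). Your additional check that a root with $a x_1 + c = 0$ would force $bc - ad = 0$, ruled out generically via multiplicativity of determinants along the cycle, is a point of care the paper's proof passes over silently; it only strengthens the argument and is harmless in any case since case 1 claims only an upper bound of two solutions.
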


(In our application, case 3 cannot happen, since we assume that the covariances 
are in accordance with the model.)

\begin{proof} %
In the first case, depending whether the discriminant $\Delta := (c - b)^2 + 4ad$ is $>0$,
equals $0$ or is $< 0$, we have two, one, or zero solutions, respectively.
They are given by $\frac{ - (c-b) \pm \sqrt{\Delta}}{2a}$. 
In the second case, the solution is given by $\frac{d}{c-b}$.
The other two cases are clear.
\end{proof}

\begin{remark}
$a = c-b = d = 0$ in item 4 of Lemma~\ref{lem:cycle:ident} is equivalent to the statement that 
$\left(\begin{smallmatrix} b & d \\ a & c \end{smallmatrix}\right)$ is a multiple of the identity
matrix.
\end{remark}

\Citet{DBLP:conf/aistats/ZanderWBL22} also use cycles to eliminate variables. However,
they compute the coefficients not by a repeated matrix product. Instead, they compute
it via recursive determinants. To find a cycle that allows to identify a variable,
they simply enumerate all cycles, which gives an $\PSPACE$ upper bound on the complexity.
By computing with repeated matrix products, we will see
that we can efficiently find
a cycle that allows to identify a variable directly.

\begin{definition}
We call a cycle in Lemma~\ref{lem:cycle:ident} that satisfies the first or second condition
\emph{identifying}.
\end{definition}

\begin{example}
The cycle $x \to z \to y \to x$ in Figure~\ref{fig:SCM:ex} is not identifying,
since $M_{(z,x)} M_{(y,z)} M_{(x,y)} = 
\left(\begin{smallmatrix} 1 & 0 \\ 0 & 1 \end{smallmatrix}\right)$. 
On the other hand, the cycle
$x \to u \to z \to x$ is identifying, since 
$M_{(z,x)} M_{(u,z)} M_{(x,u)} = 
\left(\begin{smallmatrix} -1 & 1 \\ 2 & 1 \end{smallmatrix}\right)$
\end{example}

Now assume that the system of polynomial equations of $G$ has a finite number of solutions.
Then there is a subset of size $n$ of these equations, such that this system
has a finite number of solutions, where $n := |X|$ is the number of variables.
Graph-theoretically, the graph with these $n$ edges consists of connected
components such that in each connected component the number of nodes equals the number of edges in it.
(If there are fewer edges in one component, then we do not have a finite number of solutions.
If there are more edges in one component, then there is another one that has fewer edges,
because there are in total $n$ nodes and $n$ edges.)
A connected component with $c$ nodes and $c$ edges is an undirected tree with one additional
edge which creates a cycle. The system of the equations which belong to the edges
of the cycle has a finite number of solutions, too. (This follows easily,
since every variable not in the cycle appears in only one equation.)
It is well-known \citep{CJO:ideals} that such a system of polynomial equations has a finite
number of solutions if the iterated resultant of the polynomials 
(as in Lemma~\ref{lem:res}) is nonzero.
By the correspondence between resultants and matrix products (Lemma~\ref{lem:res})
we get:

\begin{lemma} \label{lem:correct:0}
If the system of equations given by $G$ has finitely many solutions, 
then $\vec G$ has an identifying cycle. \qed
\end{lemma}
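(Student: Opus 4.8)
The plan is to turn the degree-of-freedom counting sketched before the statement into three linked reductions: from the full system of $G$ to a spanning subgraph with $|X|$ edges, from each connected component of that subgraph to its unique cycle, and from a cycle's bilinear system to the univariate quadratic analysed in Lemma~\ref{lem:cycle:ident}.

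First I would use the hypothesis that $G$ has finitely many solutions to extract a subgraph $H \subseteq G$ on the vertex set $X$ with exactly $n = |X|$ edges whose own system is still finite. A variable occurring in no retained equation is unconstrained, so $H$ must be spanning; and since each equation can drop the solution dimension by at most one, a connected component of $H$ that is a tree (one fewer edge than vertices) is underdetermined and already forces infinitely many solutions. Hence every component satisfies (number of edges) $\ge$ (number of vertices); as these quantities sum to $n$ on both sides, the inequality is an equality in each component, so each component is a tree plus one extra edge and therefore carries a unique cycle.

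Next I would pass from such a unicyclic component to its cycle $C$. Every vertex outside $C$ sits on a tree hanging off $C$; processing these trees from the leaves inward, each such variable occurs, at the moment it is eliminated, in only one remaining bilinear equation, so solving~(\ref{eq:res:1}) for it expresses it as a M\"obius image of its neighbour. Consequently every solution of the subsystem carried by $C$ alone extends, generically uniquely, to a solution of the whole component, so $C$'s subsystem has at most as many solutions and is itself finite. Eliminating all but one variable around $C$ by iterated resultants then yields, through the resultant--matrix correspondence of Lemma~\ref{lem:res}, exactly the quadratic $a x_1^2 + (c-b) x_1 - d$ whose coefficients are the entries of the cycle product $M_{(x_t,x_1)} \cdots M_{(x_1,x_2)}$. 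Finiteness of $C$'s subsystem rules out item~4 of Lemma~\ref{lem:cycle:ident}, the only case with $a = c-b = d = 0$, while the standing assumption that the covariances come from an actual model rules out item~3; we are therefore in item~1 or~2, and $C$ is by definition an identifying cycle of $\vec G$.

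The hard part will be the first reduction---the existence of a finite $n$-edge subsystem and its unicyclic structure. For arbitrary polynomials a zero-dimensional variety need not be cut out by any $n$-element subset of the given equations, so this step genuinely exploits the special form of our system: each equation involves exactly two variables and each pair of variables carries at most one equation, which is what makes the solution-dimension count behave like the cycle rank of a simple graph. A clean way to make this rigorous, and a useful sanity check on the argument as a whole, is the contrapositive via propagation: were every fundamental cycle of a spanning tree of $G$ non-identifying, each of these cycle products would be a scalar matrix by the remark following Lemma~\ref{lem:cycle:ident}, hence the identity M\"obius map; fixing the root variable freely and propagating its value along the spanning tree would then satisfy every non-tree equation automatically, producing a one-parameter family of solutions and contradicting finiteness. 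I would invoke exactly this propagation argument to guarantee that an identifying fundamental cycle, and hence the subgraph above, must exist.
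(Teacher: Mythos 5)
Your first three paragraphs retrace the paper's own argument almost verbatim: extract an $n$-edge subsystem that still has finitely many solutions, count edges against nodes to conclude every component of that subsystem is unicyclic, eliminate the pendant-tree variables (each occurs in exactly one remaining equation) to reduce to the cycle's subsystem, and then invoke the resultant--matrix-product correspondence of Lemma~\ref{lem:res} to land in case~1 or~2 of Lemma~\ref{lem:cycle:ident}, with case~3 excluded by the model assumption, exactly as in the paper's parenthetical remark. Where you genuinely depart from the paper is your final paragraph. The paper simply \emph{asserts} the existence of the finite $n$-edge subsystem, and you are right that this assertion is false for general polynomial systems, so it needs the special bilinear two-variable structure. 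Your proposed fix --- the contrapositive holonomy argument: if every fundamental cycle of a spanning tree has scalar weight product, fix the root variable freely, propagate by the edge M\"obius maps along the tree, observe that tree equations hold by construction and each non-tree equation holds because the corresponding fundamental-cycle holonomy is the identity M\"obius map, yielding a one-parameter solution family --- is not in the paper, and it is in fact a self-contained direct proof of Lemma~\ref{lem:correct:0} for connected $G$ that bypasses the subsystem extraction entirely. It is arguably cleaner than the paper's sketch, and it buys an explicit witness for unidentifiability (the one-parameter family), which the paper's resultant citation does not provide.

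Two caveats, both at roughly the same level of informality as the paper's own treatment, but you should make them explicit. First, ``non-identifying'' also covers case~3 of Lemma~\ref{lem:cycle:ident} ($a = c-b = 0$, $d \neq 0$), where the product is \emph{not} scalar; you do wave at the model assumption for this, but your sentence ``each of these cycle products would be a scalar matrix'' silently uses it, so state the exclusion where you infer scalarity. Second, and more substantively, the holonomy step presupposes invertible edge maps: the inference from ``scalar product'' to ``identity M\"obius map,'' the extension of trivial holonomy from fundamental cycles to the non-tree equations, and likewise your earlier claim that item~4 on the cycle forces infinitely many solutions, all require rank-2 edges, since a rank-1 edge carries a singular matrix whose ``M\"obius map'' is constant, and products of singular matrices can even be the zero matrix (a multiple of $I$ in the sense of the paper's remark, but inducing no M\"obius transformation at all). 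Multiplicativity of the determinant rescues you when the scalar is nonzero --- a nonzero scalar product forces every factor on the cycle to be invertible --- but the zero-product case would need a separate argument. Since Algorithm~\ref{alg:final} removes all rank-1 edges before this lemma is applied, your argument is sound in the setting where it is used; as a statement about arbitrary $G$, you should either add the rank-2 hypothesis (as Lemma~\ref{lem:correct:1} does) or handle singular cycle products explicitly.
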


We also have a converse in the case when all edges in $G$ are rank-2 edges. 

\begin{lemma} \label{lem:correct:1}
Let all edges in $G$ be rank-2 edges. If there is an identifying cycle in $\vec G$,
then the system of equations given by $G$ has a finite number of solutions.
\end{lemma}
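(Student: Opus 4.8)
The plan is to show that an identifying cycle pins one variable down to finitely many values, and that, because every edge has rank two, this information then propagates through the whole connected graph $G$ without ever leaving a variable free. First I would isolate the key property of a single rank-2 edge: if the equation $auv - bu + cv - d = 0$ comes from a rank-2 edge, so that $\det\left(\begin{smallmatrix} b & d \\ a & c \end{smallmatrix}\right) = bc - ad \neq 0$, then fixing a value for $u$ either determines $v$ uniquely or makes the equation infeasible, but never leaves $v$ free. Indeed, solving gives $v(au + c) = bu + d$; if $au + c \neq 0$ then $v$ is uniquely determined, while if $au + c = 0$ then necessarily $bu + d \neq 0$ (otherwise $\left(\begin{smallmatrix} u \\ 1\end{smallmatrix}\right)$ would lie in the kernel of the invertible matrix $\left(\begin{smallmatrix} a & c \\ b & d \end{smallmatrix}\right)$), so no $v$ works. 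The same holds with $u$ and $v$ exchanged, since the reverse edge carries the adjoint matrix, which for a $2\times 2$ matrix has the same nonzero determinant.

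Next I would use the identifying cycle to bound one variable. Let $x_1,\dots,x_t,x_1$ be the identifying cycle and let $\left(\begin{smallmatrix} b & d \\ a & c \end{smallmatrix}\right)$ be its matrix product as in Lemma~\ref{lem:cycle:ident}. Being identifying (case 1 or 2) means exactly that the closure polynomial $p(x_1) = a x_1^2 + (c-b)x_1 - d$ is a nonzero polynomial. By Lemma~\ref{lem:res} together with the closing edge, $p(x_1)$ is the iterated resultant of the cycle's edge polynomials. Since a resultant always lies in the ideal generated by its two arguments, it vanishes at every common zero; iterating, $p(x_1)$ vanishes at the $x_1$-coordinate of every solution of the cycle subsystem. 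As the cycle equations are among the equations of $G$, any solution of the full system satisfies $p(x_1) = 0$, so $x_1$ is restricted to the finitely many (at most two) roots of the nonzero polynomial $p$.

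Finally I would propagate. Since $G$ is connected, I fix a spanning tree $T$ rooted at $x_1$ and process its nodes away from the root. For every non-root node $v$ with parent $u$, the tree edge has rank two, so by the first step, once $u$ has been assigned, $v$ is forced to a unique value or the branch is infeasible. Hence every value of $x_1$ extends to at most one assignment on all of $X$ consistent with the tree edges, and any genuine solution must coincide with this propagated assignment. Combining with the previous step, the number of solutions is at most the number of roots of $p$, which is finite.

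I expect the main obstacle to be the second step: making precise that the identifying cycle forces $p(x_1) = 0$ on every solution. The clean route is the resultant-in-the-ideal fact, which sidesteps the poles that arise if one instead composes the M\"obius transforms directly; one must check that this direction of the argument does not suffer from vanishing-leading-coefficient subtleties, which it does not, because $\Res$ is literally a polynomial combination of its inputs and therefore vanishes on any common zero. The rank-2 key property of the first step is precisely what guarantees that propagation never introduces a free variable, and this is exactly where the hypothesis that all edges have rank two (rather than rank one) is essential.
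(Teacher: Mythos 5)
Your proof is correct, and its skeleton matches the paper's: the identifying cycle confines $x_1$ to finitely many values, and the rank-2 hypothesis propagates finiteness through the connected graph $G$. The justifications differ at both steps, though. For the cycle step the paper simply asserts that cases 1--3 of Lemma~\ref{lem:cycle:ident} give a nontrivial equation for $x_1$; you additionally justify why every solution of the full system must satisfy that equation, via the fact that $\Res_y(f,g)$ lies in the ideal generated by $f$ and $g$ (for polynomials linear in $y$, say $f = f_1 y + f_0$ and $g = g_1 y + g_0$, one has $\Res_y(f,g) = \pm(g_1 f - f_1 g)$, a literal polynomial combination), which also settles the vanishing-leading-coefficient worry you flag. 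For the propagation step the paper argues projectively: each rank-2 edge is an invertible M\"obius transform, hence an isomorphism of the projective line, so every variable has the same number of solutions as $x_1$; you instead argue affinely along a spanning tree, with the dichotomy that a fixed value of $u$ in $auv - bu + cv - d = 0$ either determines $v$ uniquely (when $au + c \neq 0$) or is infeasible (when $au + c = 0$, since rank 2 forces $bu + d \neq 0$). Your version is more elementary and makes explicit the pole cases that the paper hides in the word ``projectively''; the paper's version is stronger, because the projective bijection yields an exact count --- the total number of solutions equals the number of solutions for $x_1$ --- which the paper exploits in Remark~\ref{rem:numbersols}, whereas your spanning-tree argument delivers only an upper bound (each root of the closure polynomial extends to at most one assignment, which need not satisfy the non-tree equations). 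For the finiteness claimed in the lemma, your upper bound suffices.
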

\begin{proof}
Since the cycle is identifying, we get a nontrivial equation
for one of the variables, say $x_1$ (cases 1--3 in Lemma~\ref{lem:cycle:ident}).
Since all edges have rank-2, the corresponding transformations
in $\vec G$ are invertible M\"obius transforms. It is well-known
that the M\"obius transform is an isomorphism of the projective 
line, see \citet{moebius}. Therefore, each variable in $G$ (projectively)
has the same number of solutions, since we can transform them into
each other using the M\"obius transforms along the edges. (Recall that
$G$ is connected.)
\end{proof}

\begin{remark} \label{rem:numbersols}
Note that from the proof of Lemma~\ref{lem:correct:1} it even follows that 
the number of total solutions is the number of solutions for $x_1$.
\end{remark}

\section{Finding Identifying Cycles}

\label{sec:finding}

A \emph{directed walk} in a directed graph $\vec G = (X,E)$ 
is a sequence of nodes $x_0,\dots,x_t$ (potentially with repetitions)
such that $(x_i,x_{i+1})$ is an edge in $\vec G$ for all $0 \le i < t$. 
$t$ is the length of the walk.
A walk with no repetitions is called a \emph{path}.
A walk is \emph{closed} if $x_0 = x_t$.

Now assume that each edge $e$ in $\vec G$ has a weight $w_e$. Let $W$ be the weighted adjacency
matrix, that is, the entry in position $(i,j)$ is $w_{(v_i,v_j)}$ if $(v_i,v_j) \in E$ 
and $0$ otherwise.
(We assume 
w.l.o.g.\ that $X = \{v_1,\dots,v_n\}$.) For a walk $p$
consisting of nodes $x_1,\dots,x_t$, the weight $p$ is defined as 
$w(p) = w_{(x_1,x_2)} \cdots w_{(x_{t-1},x_t)}$, that is, the product of the edge weights. 
The weights can come from any ring. In the light of the previous section, we will choose the 
ring $\Rset[\sigma_{i,j}]^{2 \times 2}$, the ring of 
$2 \times 2$-matrices with polynomials in the $\sigma_{i,j}$ as entries,
which are itself polynomials in the $\lambda_{i,j}$ and $\omega_{i,j}$.
Note that this ring is not commutative, so the order in which
we multiply the weights matters.

\begin{fact} \label{fact:cycle:1}
The entry of $W^t$ in position $(i,j)$ is the sum of the weights of all directed walks of length $t$ 
from $v_i$ to $v_j$. \qed
\end{fact}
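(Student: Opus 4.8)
The plan is to prove the statement by induction on the walk length $t$. The fact itself is the classical "powers of the adjacency matrix count weighted walks" result, so the combinatorics is routine; the one genuine point of care is that the weight ring $\Rset[\sigma_{i,j}]^{2\times 2}$ is non-commutative, so the left-to-right order in which the edge weights are multiplied along a walk must be matched exactly against the order of the factors produced by the matrix product. I would therefore state the induction in a way that keeps this order explicit throughout.

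For the base case I would take $t = 1$. Then $W^1 = W$, and by definition $W_{i,j} = w_{(v_i,v_j)}$ when $(v_i,v_j) \in E$ and $0$ otherwise, which is exactly the weight of the unique length-$1$ walk from $v_i$ to $v_j$ (the single edge, when it exists) and $0$ when no such walk exists. For the inductive step, assume the claim for $t$, so that $(W^t)_{i,k} = \sum_q w(q)$, the sum ranging over all length-$t$ walks $q$ from $v_i$ to $v_k$. Writing $W^{t+1} = W^t \cdot W$ and expanding entrywise gives
\[
  (W^{t+1})_{i,j} = \sum_{k} (W^t)_{i,k}\, W_{k,j},
\]
where, crucially, the factor $(W^t)_{i,k}$ stands to the \emph{left} of $W_{k,j}$. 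The key observation is that every length-$(t+1)$ walk $p$ from $v_i$ to $v_j$ decomposes uniquely as a length-$t$ walk $q$ from $v_i$ to the penultimate node $v_k$, followed by the final edge $(v_k,v_j)$, and that the traversal-order convention yields $w(p) = w(q)\, w_{(v_k,v_j)}$ with the prefix weight on the left. Substituting the inductive hypothesis and this factorization turns the right-hand side into exactly $\sum_p w(p)$ over all length-$(t+1)$ walks from $v_i$ to $v_j$, closing the induction.

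The only place a careless argument could fail is the bookkeeping of multiplication order, and this is the point I would flag as the main obstacle. The convention $w(p) = w_{(x_1,x_2)} \cdots w_{(x_{t-1},x_t)}$ appends each new edge weight on the right, which is precisely what writing $W^{t+1} = W^t \cdot W$ (peeling off the \emph{last} edge) produces. Had one instead peeled off the first edge and used $W^{t+1} = W \cdot W^t$, the matching decomposition would be "first edge then length-$t$ suffix" with weight factorization $w(p) = w_{(v_i,v_k)}\, w(q')$; both routes agree by associativity, but one must commit to a single convention and never silently reorder a product. With that convention fixed, the remainder is the standard walk-counting argument and goes through unchanged in the non-commutative setting.
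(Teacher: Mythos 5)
Your proof is correct and is essentially the argument the paper relies on: the paper gives no proof of this fact itself but defers to the standard reference (Stanley), where the result is exactly the walk-counting induction you carry out. Your base case, the unique decomposition of a length-$(t+1)$ walk into a length-$t$ prefix plus final edge, and in particular your explicit matching of the left-to-right product convention $w(p) = w_{(x_1,x_2)} \cdots w_{(x_{t-1},x_t)}$ against $W^{t+1} = W^t \cdot W$ all align with the paper's setup, including its own caution that the weight ring $\Rset[\sigma_{i,j}]^{2\times 2}$ is non-commutative.
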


For a proof, see \citet{stanley}. If the edge weights for instance are all $1$, then
the entry of the matrix power $W^t$ in position $(i,j)$ just counts all walks of length $t$ from $v_i$ to $v_j$.
However, the weights of different paths might not simply add up as in this simple case,
but the weights of the all walks from $v_i$ to $v_j$ may add up to $0$. So the entry
of $W^t$ in position $(i,j)$ is zero, but there are walks from $v_i$ to $v_j$.
Therefore, we extend the weights to make every walk unique. Given the directed graph $\vec G = (X,E)$,
we construct a layered version $\vec G^{(t)} = (X^{(t)},E^{(t)})$ of it. For every node $v_i \in V$,
there are $t+1$ copies $v_i^{(0)},\dots,v_i^{(t)}$. And for each edge $(v_i,v_j) \in E$, there are $t$ copies
$(v_i^{(k)},v_j^{(k+1)})$, $0 \le k < t$. We call all nodes with index $k$ the $k$th layer of $\vec G^{(t)}$.
Edges only go from one layer to the next. One can think of the layers as being time steps.
In $\vec G^{(t)}$, every walk is neccessarily a path, since we cannot take an edge twice due
to the layered structure. Figure~\ref{fig:SCM:ex-walk} shows the layered
graph corresponding to the graph in Figure~\ref{fig:SCM:ex}.

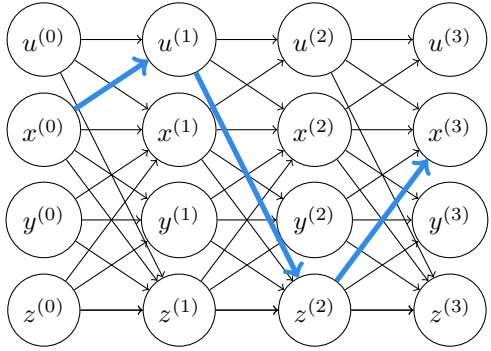
\begin{figure}
\centering
\begin{tikzpicture}[scale=0.6] 
     \node[circle, draw] (u) at (0,6) {$u^{(0)}$};
    \node[circle, draw] (x) at (0,4) {$x^{(0)}$};
    \node[circle, draw] (y) at (0,2) {$y^{(0)}$};
     \node[circle, draw] (z) at (0,0) {$z^{(0)}$};
     
      \node[circle, draw] (u2) at (3,6) {$u^{(1)}$};
     \node[circle, draw] (x2) at (3,4) {$x^{(1)}$};
    \node[circle, draw] (y2) at (3,2) {$y^{(1)}$};
      \node[circle, draw] (z2) at (3,0) {$z^{(1)}$};

     \node[circle, draw] (u3) at (6,6) {$u^{(2)}$};
    \node[circle, draw] (x3) at (6,4) {$x^{(2)}$};
    \node[circle, draw] (y3) at (6,2) {$y^{(2)}$};
     \node[circle, draw] (z3) at (6,0) {$z^{(2)}$};

     \node[circle, draw] (u4) at (9,6) {$u^{(3)}$};
    \node[circle, draw] (x4) at (9,4) {$x^{(3)}$};
    \node[circle, draw] (y4) at (9,2) {$y^{(3)}$};
     \node[circle, draw] (z4) at (9,0) {$z^{(3)}$};

      \draw [->] (u) edge node [midway,above]{}(u2);
      \draw [->] (u2) edge node [midway,above]{}(u3);
      \draw [->] (u3) edge node [midway,above]{}(u4);

      \draw [->] (x) edge node [midway,above]{}(x2);
      \draw [->] (x2) edge node [midway,above]{}(x3);
      \draw [->] (x3) edge node [midway,above]{}(x4);\

      \draw [->] (y) edge node [midway,above]{}(y2);
      \draw [->] (y2) edge node [midway,above]{}(y3);
      \draw [->] (y3) edge node [midway,above]{}(y4);

       \draw [->] (z) edge node [midway,above]{}(z2);
      \draw [->] (z2) edge node [midway,above]{}(z3);
      \draw [->] (z3) edge node [midway,above]{}(z4);

        \draw [->] (u) edge node [midway,above]{}(x2);
        \draw [->] (u) edge node [midway,above]{}(z2);

        \draw [->] (u2) edge node [midway,above]{}(x3);
      
        \draw [->] (u3) edge node [midway,above]{}(x4);
        \draw [->] (u3) edge node [midway,above]{}(z4);

        \draw [->, line width=2pt, color = bleudefrance] (x) edge node [midway,above]{}(u2);
        \draw [->] (x) edge node [midway,above]{}(y2);
        \draw [->] (x) edge node [midway,above]{}(z2);

         \draw [->] (x2) edge node [midway,above]{}(u3);
        \draw [->] (x2) edge node [midway,above]{}(y3);
        \draw [->] (x2) edge node [midway,above]{}(z3);

         \draw [->] (x3) edge node [midway,above]{}(u4);
        \draw [->] (x3) edge node [midway,above]{}(y4);
        \draw [->] (x3) edge node [midway,above]{}(z4);

        \draw [->] (y) edge node [midway,above]{}(x2);
        \draw [->] (y) edge node [midway,above]{}(z2);

         \draw [->] (y2) edge node [midway,above]{}(x3);
        \draw [->] (y2) edge node [midway,above]{}(z3);

         \draw [->] (y3) edge node [midway,above]{}(x4);
        \draw [->] (y3) edge node [midway,above]{}(z4);

         \draw [->] (z) edge node [midway,above]{}(x2);
         \draw [->] (z) edge node [midway,above]{}(y2);
         \draw [->] (z) edge node [midway,above]{}(z2);

         \draw [->] (z2) edge node [midway,above]{}(x3);
         \draw [->] (z2) edge node [midway,above]{}(y3);
         \draw [->] (z2) edge node [midway,above]{}(z3);

         \draw [->, line width=2pt, color = bleudefrance] (z3) edge node [midway,above]{}(x4);
         \draw [->] (z3) edge node [midway,above]{}(y4);
         \draw [->] (z3) edge node [midway,above]{}(z4);     

         \draw [->, line width=2pt, color = bleudefrance] (u2) edge node [midway,above]{}(z3);

  \end{tikzpicture}
\caption{The layered graph $\vec G^{(3)}$ corresponding to $\vec G$ from Figure \ref{fig:SCM:ex}. Weights are not drawn for simplicity.
The bold blue path in this graph corresponds to the identifying cycle 
$x \to u \to z \to x$. \label{fig:SCM:ex-walk}}
\end{figure}

\begin{observation} \label{obs:cycle:1}
There is a one to one correspondence between walks of length $t$ from $v_i$ to $v_j$ in $\vec G$
and paths from $v_i^{(0)}$ to $v_j^{(t)}$ in $\vec G^{(t)}$. \qed
\end{observation}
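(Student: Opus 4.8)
The plan is to exhibit an explicit bijection and verify it is well defined, injective, and surjective, using the observation that in $\vec G^{(t)}$ every edge increases the layer index by exactly one. Given a walk $x_0, x_1, \dots, x_t$ of length $t$ in $\vec G$ with $x_0 = v_i$ and $x_t = v_j$, I would map it to the sequence $x_0^{(0)}, x_1^{(1)}, \dots, x_t^{(t)}$ in $\vec G^{(t)}$, placing the $k$th node of the walk into the $k$th layer.

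First I would check this map is well defined, i.e.\ that the image really is a path from $v_i^{(0)}$ to $v_j^{(t)}$. Since $(x_k, x_{k+1}) \in E$, the construction of $\vec G^{(t)}$ provides the edge $(x_k^{(k)}, x_{k+1}^{(k+1)}) \in E^{(t)}$ for each $0 \le k < t$, so the sequence is a walk; and because the $t+1$ nodes lie in distinct layers, no node is repeated, so it is in fact a path. The endpoints match by construction. Injectivity is immediate, as stripping the superscripts recovers the original walk uniquely.

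The only step needing a small argument is surjectivity, which hinges on the layered structure. Every edge of $\vec G^{(t)}$ goes from layer $k$ to layer $k+1$, so any path starting at $v_i^{(0)}$ (layer $0$) and ending at $v_j^{(t)}$ (layer $t$) must use exactly $t$ edges and visit precisely one node in each layer $0, 1, \dots, t$. Writing such a path as $x_0^{(0)}, x_1^{(1)}, \dots, x_t^{(t)}$, the presence of each edge $(x_k^{(k)}, x_{k+1}^{(k+1)}) \in E^{(t)}$ forces $(x_k, x_{k+1}) \in E$ back in $\vec G$, so $x_0, \dots, x_t$ is a walk of length $t$ from $v_i$ to $v_j$ mapping to the given path.

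I expect no real obstacle here: the statement is a standard time-expansion / layering argument, and the substance is entirely the layer-monotonicity of edges in $\vec G^{(t)}$, which guarantees both that images are genuine paths (no repetitions) and that every target path decomposes uniquely layer by layer. If desired, I would add one sentence noting that under this bijection the weight of a walk equals the weight of the corresponding path, since each layered edge inherits the weight of its original in $\vec G$; this is what lets Fact~\ref{fact:cycle:1} be applied to $\vec G^{(t)}$ without the cancellation issue described before the observation.
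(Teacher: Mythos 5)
Your proof is correct and is exactly the standard time-expansion argument the paper has in mind: the observation is stated with an immediate \(\qed\) and no written proof, and your explicit bijection \(x_k \mapsto x_k^{(k)}\), with injectivity by stripping superscripts and surjectivity from the layer-monotonicity of edges in \(\vec G^{(t)}\), fills in precisely the routine details the authors omit. Your closing remark that corresponding walks and paths carry equal weights (before the \(x_{i,j}^{(k)}\) relabeling) matches how the paper uses the observation in Observation~\ref{obs:cycle:2} and Lemma~\ref{lem:cycle:1}.
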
 

If $(v_i,v_j)$ has weight $w_{(v_i,v_j)}$ in $\vec G$, then all edges $(v_i^{(k)},v_{j}^{(k+1)})$, $0 \le k < t$, have
weight $w_{(v_i,v_j)}$ in $\vec G^{(t)}$. For a walk $p$ in $\vec G$ of length $t$,
let $q$ be the corresponding path in $\vec G^{(t)}$. We have $w(p) = w(q)$, so the cancellation
problem is still there. We resolve this problem by extending the weight function of $\vec G^{(t)}$.
Let $x_{i,j}^{(k)}$, $1 \le i,j \le n$, $1 \le k \le t$, be new indeterminates. The new weight
of the edge $(v_i^{(k)},v_j^{(k+1)})$ is $\hat w(v_i^{(k)},v_j^{(k+1)}) = x_{i,j}^{(k+1)} w(v_i,v_j)$
So if the original weights are from some ring $R$, the new weights are now from the polynomial
ring $\hat R := R[x_{i,j}^{(k)}]$.

\begin{observation} \label{obs:cycle:2}
Let $p$ be a walk in $\vec G$ that visits the nodes $v_{i_0},\dots,v_{i_t}$ in $\vec G$ in this order.
Let $q$ be the corresponding path in $\vec G^{(t)}$. Then 
$\hat w(q) = \prod_{k = 1}^t x_{i_{k-1}, i_k}^{(k)} w(p)$.
In this way, each path $q$ with $w(p) \not=0$  gets a unique weight (in $\hat R$), since the monomial  
$\prod_{k = 1}^t x_{i_{k-1}, i_k}^{(k)}$ exactly describes the edges taken in $q$
and their order. \qed
\end{observation}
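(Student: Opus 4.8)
The plan is to compute $\hat w(q)$ directly from the definition of the extended weight function on $\vec G^{(t)}$ and then to factor the freshly introduced indeterminates out of the product. First I would record which edges the path $q$ traverses. By the correspondence of Observation~\ref{obs:cycle:1}, since $p$ visits $v_{i_0},\dots,v_{i_t}$ in $\vec G$, the path $q$ uses precisely the edges $(v_{i_{k-1}}^{(k-1)}, v_{i_k}^{(k)})$ for $k = 1,\dots,t$, exactly one per layer transition. By the definition of the extended weight, each such edge carries $\hat w(v_{i_{k-1}}^{(k-1)}, v_{i_k}^{(k)}) = x_{i_{k-1},i_k}^{(k)}\, w(v_{i_{k-1}}, v_{i_k})$, so multiplying along $q$ in order of traversal yields
\[
  \hat w(q) = \prod_{k=1}^{t} \Bigl( x_{i_{k-1},i_k}^{(k)}\, w(v_{i_{k-1}}, v_{i_k}) \Bigr).
\]

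The key step is to separate the scalar indeterminates from the (possibly non-commutative) ring factors. Because $\hat R = R[x_{i,j}^{(k)}]$ is the polynomial ring over $R$, each indeterminate $x_{i,j}^{(k)}$ is central and in particular commutes with every element of $R$. Hence I can pull all the factors $x_{i_{k-1},i_k}^{(k)}$ to the front \emph{without} disturbing the left-to-right order of the $R$-factors, which is exactly the order defining $w(p)$. This gives
\[
  \hat w(q) = \Bigl(\prod_{k=1}^{t} x_{i_{k-1},i_k}^{(k)}\Bigr) \prod_{k=1}^{t} w(v_{i_{k-1}}, v_{i_k}) = \Bigl(\prod_{k=1}^{t} x_{i_{k-1},i_k}^{(k)}\Bigr) w(p),
\]
establishing the claimed formula. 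I expect this commutation to be the one point that genuinely needs care: in our application the weights in $R$ are $2\times 2$ matrices, so the order of the $R$-factors must be preserved, and it is, precisely because the $x$'s lie in the center of the coefficient ring and so commute past matrices without reordering them.

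For the uniqueness claim I would argue that the monomial $m(q) := \prod_{k=1}^{t} x_{i_{k-1},i_k}^{(k)}$ determines the walk $p$, equivalently $q$. Indeed, for each layer transition $k$ the monomial contains exactly one variable of the form $x_{\cdot,\cdot}^{(k)}$, namely $x_{i_{k-1},i_k}^{(k)}$, whose two lower indices reveal the endpoints $v_{i_{k-1}}, v_{i_k}$ of the $k$-th edge. Reading these off for $k=1,\dots,t$ recovers the entire vertex sequence $v_{i_0},\dots,v_{i_t}$, so the map $q \mapsto m(q)$ is injective. Consequently distinct paths carry weights supported on distinct monomials of $\hat R$, and whenever $w(p)\neq 0$ the weight $m(q)\,w(p)$ is a nonzero element whose monomial content pins down $q$ uniquely. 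In particular no two such weights can cancel against one another in a sum, which is exactly the cancellation-freeness motivating the construction (cf.\ the failure of Fact~\ref{fact:cycle:1} in the non-commutative setting).
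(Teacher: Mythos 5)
Your proof is correct and takes essentially the same route as the paper, which states the observation without a separate proof because it follows immediately from the definition of $\hat w$ and the fact that the monomial $\prod_{k=1}^{t} x_{i_{k-1},i_k}^{(k)}$ encodes the edge sequence. Your explicit remark that the indeterminates $x_{i,j}^{(k)}$ are central in $\hat R = R[x_{i,j}^{(k)}]$, and hence commute past the noncommutative $2\times 2$-matrix factors without disturbing their order, is a worthwhile detail that the paper leaves implicit.
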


\begin{example}
If we identify the nodes $u,x,y,z$ with $v_1,v_2,v_3,v_4$ in 
Figure~\ref{fig:SCM:ex-walk}, then the bold blue path has weight 
$x_{2,1}^{(1)} x_{1,4}^{(2)} x_{4,2}^{(3)} \cdot
\left(\begin{smallmatrix} -1 & 1 \\ 2 & 1 \end{smallmatrix}\right)$.
$\left(\begin{smallmatrix} -1 & 1 \\ 2 & 1 \end{smallmatrix}\right)$
is the original weight in $\vec G$ and $x_{2,1}^{(1)} x_{1,4}^{(2)} x_{4,2}^{(3)}$
is the unique label of the path.
\end{example}

Let $\hat W$ be the weighted adjacency matrix of $\vec G^{(t)}$ using the weights $\hat w$.
By Fact~\ref{fact:cycle:1} and Observation~\ref{obs:cycle:2} we have:

\begin{lemma} \label{lem:cycle:1}
The entry of $\hat W^t$ corresponding to the nodes $v_i^{(0)}$ and $v_j^{(t)}$ 
is the sum of all $\prod_{k = 1}^t x_{i_{k-1}, i_k}^{(k)} w(p)$
over all walks $p$ of length $t$ from $v_i$ to $v_j$ in $\vec G$ 
(interpreted as a path in $\vec G^{(t)}$ using Observation~\ref{obs:cycle:1}). \qed
\end{lemma}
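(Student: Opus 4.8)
The plan is to read the statement off directly from Fact~\ref{fact:cycle:1} applied to the layered graph $\vec G^{(t)}$, and then translate the resulting sum over walks in $\vec G^{(t)}$ back into a sum over walks in $\vec G$ via Observations~\ref{obs:cycle:1} and~\ref{obs:cycle:2}. No genuinely new idea is needed; the work is purely bookkeeping, which is why the statement is phrased as an immediate consequence of the two cited results.

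First I would invoke Fact~\ref{fact:cycle:1} with $\vec G^{(t)}$ and its weighted adjacency matrix $\hat W$ in place of $\vec G$ and $W$: the entry of $\hat W^t$ in position $(v_i^{(0)}, v_j^{(t)})$ equals the sum of the $\hat w$-weights over all directed walks of length $t$ from $v_i^{(0)}$ to $v_j^{(t)}$ in $\vec G^{(t)}$. The second step is to exploit the layered structure. Every edge of $\vec G^{(t)}$ raises the layer index by exactly one, so a walk of length $t$ that starts in layer $0$ necessarily ends in layer $t$ and can never revisit a node; hence each such walk is already a path. By Observation~\ref{obs:cycle:1} these paths are in bijection with the walks of length $t$ from $v_i$ to $v_j$ in $\vec G$, so the index set of the sum is exactly the one claimed in the lemma.

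The third step is to evaluate the summand. For a walk $p$ in $\vec G$ visiting $v_{i_0},\dots,v_{i_t}$ with $i_0 = i$ and $i_t = j$, Observation~\ref{obs:cycle:2} gives the weight of the corresponding path $q$ in $\vec G^{(t)}$ as $\hat w(q) = \prod_{k=1}^t x_{i_{k-1},i_k}^{(k)}\, w(p)$. Substituting this into the sum obtained in the first step yields precisely $\sum_p \prod_{k=1}^t x_{i_{k-1},i_k}^{(k)}\, w(p)$, which is the assertion.

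The one point that needs care --- and the only place where something could go wrong --- is the noncommutativity of the weight ring $\hat R = R[x_{i,j}^{(k)}]$ with $R = \Rset[\sigma_{i,j}]^{2\times 2}$. I would check two things. First, the expansion $(\hat W^t)_{v_i^{(0)},v_j^{(t)}} = \sum \hat w_{e_1}\cdots \hat w_{e_t}$ produced by matrix multiplication composes the edge weights in the left-to-right order in which the walk traverses the edges, because the chained summation indices run $v_i^{(0)} \to \cdots \to v_j^{(t)}$; this is exactly the order used in the definition of $w(p)$, so the matrix factors line up correctly. Second, the scalar indeterminates $x_{i,j}^{(k)}$ are central in $\hat R$, so they may be pulled to the front of the noncommutative product of $2\times 2$ matrices without reordering those matrix factors; this legitimizes the clean factorization $\hat w(q) = \left(\prod_k x_{i_{k-1},i_k}^{(k)}\right) w(p)$. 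Once these two observations are recorded, the three steps compose and the proof is complete.
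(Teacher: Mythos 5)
Your proposal is correct and follows exactly the paper's route: the paper derives the lemma immediately from Fact~\ref{fact:cycle:1} applied to $\vec G^{(t)}$ together with Observations~\ref{obs:cycle:1} and~\ref{obs:cycle:2}, which is precisely your three-step bookkeeping. Your extra checks---that the matrix factors in the expansion of $\hat W^t$ appear in the traversal order used to define $w(p)$, and that the central scalars $x_{i,j}^{(k)}$ can be pulled out of the noncommutative product---are sound and simply make explicit what the paper leaves implicit.
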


The matrix $\hat W$ has size $(t+1)n$. However, due to the layered structure of $G^{(t)}$,
it has a special structure: 
It has a block structure with $\hat W_1,\dots,\hat W_t$
being the only nonzero $n \times n$ blocks above the main diagonal,
\begin{equation} \label{eq:W}
    \hat W = \left(\begin{array}{ccccc} 
             0 & \hat W_1 & 0 & \dots & 0 \\
             0 & 0 & \hat W_2 & \dots & 0  \\
             \vdots & & \ddots & \ddots & \vdots \\
             0  & 0 & \dots & 0 & \hat W_t \\
             0  & 0 & \dots & 0 & 0 \\   
             \end{array} \right).
\end{equation}
Each block $\hat W_k$ contains the weights of the edges between layer $k-1$ and $k$.
All $\hat W_k$ have similar entries, as edges between the layers
are identical and the weights only differ in the index 
of the indeterminates $x_{i,j}^{(k)}$.
see (\ref{eq:W}) in the appendix for a visualization.
It is easy to check that $\hat W^t$ has only non-zero block, which is in the top right
and contains the matrix $\hat W_1 \dots \hat W_t$. Therefore, instead
of raising $\hat W$ to the $t$th power, we can simply compute the iterated product
$\hat W_1 \dots \hat W_t$, which is more efficient.

\subsection{Polynomial Identity Testing}

Note that we cannot compute the iterated product
$\hat W_1 \dots \hat W_t$ directly, since the entries are multivariate polynomials,
which can have an exponential number of monomials.
However, we do not need to compute the polynomials explicitly, we only need to check
whether they are nonzero. This problem is again an instance of PIT.

To check whether some entry in the iterated product
$\hat W_1 \dots \hat W_t$ is nonzero, we just need to replace the variables 
by random values. 
In our case, the weights are $2 \times 2$-matrices.
Note that in Lemma~\ref{lem:res}, the weights are multiplied from the left when proceeding.
This is not a problem, since we can simply transpose all $2 \times 2$-matrices
and use the fact that $(AB)^T = B^T A^T$. 

\begin{lemma} \label{lem:walk:pit}
There is a randomized polynomial time algorithm which given a node $v_i$ checks
whether $v_i$ lies on an identifying closed walk of length at most $t$.
\end{lemma}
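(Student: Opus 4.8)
The plan is to reduce the question of whether $v_i$ lies on an identifying closed walk of length at most $t$ to a sequence of polynomial identity tests, each of which can be handled by Lemma~\ref{lem:sz}. Recall from Lemma~\ref{lem:cycle:ident} that a closed walk starting and ending at $v_i$, with associated matrix product $\left(\begin{smallmatrix} b & d \\ a & c \end{smallmatrix}\right)$, is identifying precisely when it is \emph{not} a scalar multiple of the identity, i.e.\ when at least one of $a$, $c - b$, $d$ is nonzero (cases 1 and 2, together ruling out case 4). So the core task is: decide whether there exists a closed walk of length $\ell \le t$ at $v_i$ whose weight matrix has $a \neq 0$, or $d \neq 0$, or $c - b \neq 0$.

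First I would fix a length $\ell$ and use the layered construction. By Lemma~\ref{lem:cycle:1}, the $(v_i^{(0)}, v_i^{(\ell)})$-entry of $\hat W_1 \cdots \hat W_\ell$ is a sum, over all closed walks $p$ of length $\ell$ at $v_i$, of $\bigl(\prod_{k=1}^\ell x_{i_{k-1},i_k}^{(k)}\bigr) w(p)$, where each walk contributes a distinct monomial in the layer-indeterminates by Observation~\ref{obs:cycle:2}. This entry is itself a $2 \times 2$ matrix of polynomials, say $\left(\begin{smallmatrix} B(\ell) & D(\ell) \\ A(\ell) & C(\ell) \end{smallmatrix}\right)$. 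The key observation is that because distinct walks carry distinct monomials, no cancellation between different walks can occur: the polynomial $A(\ell)$ (respectively $D(\ell)$, $C(\ell) - B(\ell)$) is nonzero if and only if \emph{some} closed walk of length $\ell$ has its weight matrix entry $a \neq 0$ (respectively $d \neq 0$, $c - b \neq 0$). Hence $v_i$ lies on an identifying closed walk of length exactly $\ell$ if and only if at least one of the three polynomials $A(\ell)$, $D(\ell)$, $C(\ell) - B(\ell)$ is not identically zero.

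Next I would run this test for every $\ell$ from $1$ to $t$, returning ``yes'' as soon as one of the three polynomials at some length is found to be nonzero. Each such polynomial is computed by a polynomial-sized arithmetic circuit: we form the iterated matrix product $\hat W_1 \cdots \hat W_\ell$, whose blocks have entries that are the original $2 \times 2$ weights scaled by single layer-indeterminates, and the $\sigma_{i,j}$ entries are themselves polynomials in the $\lambda$'s and $\omega$'s computable by a small circuit via $(I-\Lambda)^{-1}\Omega(I-\Lambda)^{-T}$. The total degree is polynomially bounded, so by Lemma~\ref{lem:sz} we plug in random values for all indeterminates (the layer-variables $x_{i,j}^{(k)}$ as well as the $\lambda$'s and $\omega$'s) from a set $S$ of size $dT/\epsilon$, evaluate the circuit over the rationals, and check whether the resulting number is nonzero. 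Since the numbers stay small, each evaluation runs in polynomial time, and the total number $T$ of invocations is $O(t)$, so the union bound keeps the overall error probability below~$\epsilon$.

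The main obstacle is ensuring that the reduction to PIT is faithful, namely that testing a single entry of the iterated matrix product really captures the \emph{existence} of an identifying walk rather than merely some aggregate property of all walks. This is exactly where the layered graph $\vec G^{(t)}$ and the unique monomial labels of Observation~\ref{obs:cycle:2} are essential: in the naive weighted adjacency matrix $W^\ell$ the weights of different walks can cancel (as the text warns, an entry may vanish even though walks exist), but with the injective monomial labelling each walk sits in its own monomial, so nonvanishing of the polynomial is equivalent to nonvanishing of the weight of \emph{at least one} walk. Once this correspondence is in place, the remaining steps are routine circuit-size and degree bookkeeping for the Schwartz--Zippel application.
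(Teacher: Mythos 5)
Your proposal is correct and follows essentially the same route as the paper: use the layered graph with unique monomial labels (Lemma~\ref{lem:cycle:1}, Observation~\ref{obs:cycle:2}) to prevent cancellation, reduce existence of an identifying closed walk to non-vanishing of the three polynomials $\hat P_{2,1}$, $\hat P_{1,2}$, $\hat P_{1,1}-\hat P_{2,2}$ in an entry of the iterated block product, and test this via Schwartz--Zippel after substituting the $\sigma_{i,j}$ by their expressions in the $\lambda$'s and $\omega$'s. The only cosmetic difference is that you test each length $\ell=1,\dots,t$ separately (matching the ``at most $t$'' phrasing directly), whereas the paper tests length exactly $t$ and handles ``at most'' via the loop in Algorithm~\ref{alg:idcycle}; your explicit justification that distinct monomials make non-vanishing equivalent to existence of a single walk with the relevant entry nonzero is the same argument the paper leaves implicit.
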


\begin{proof}%
Let $\hat P$ be the entry in position $(i,i)$
of the product $\hat W_1 \dots \hat W_t$. 
It is a polynomial in the variables $x_{i,j}^{(k)}$
whose coefficients are $2 \times 2$-matrices with polynomials
in the $\sigma_{i,j}$ as entries. We can interpret
$\hat P$ as a $2 \times 2$-matrix whose entries are polynomials
in the two sets of variables $x_{i,j}^{(k)}$ and $\sigma_{i,j}$.
We have that 
\[ 
  \hat P = \sum_{p} \prod_{k = 1}^t x_{i_{k-1}, i_k}^{(k)} w(p)
  \]
where the sum is over all closed 
walks $p$ of length $t$ from $v_i$ to $v_i$ in $\vec G$ by Lemma~\ref{lem:cycle:1}.
Thus there is an identifying cycle in $\vec G$ iff $\hat P$ is not a multiple
of the identity matrix by Lemma~\ref{lem:cycle:ident}, that is 
$\hat P_{2,1} \not= 0$ or $\hat P_{11} - \hat P_{22} \not= 0$
or $\hat P_{1,2} \not=0$. This is an instance of PIT and therefore
can be solved in randomized polynomial time using the Schwartz-Zippel Lemma.
We plug in random values for the $x_{i,j}^{(k)}$ and replace
the $\sigma_{i,j}$ by the polynomial expressions in the $\lambda_{i,j}$
and $\omega_{i,j}$ given by (\ref{eqn:SEM}) and plug in random values for these variables, too.
Then we simply evaluate.
\end{proof}

\subsection{Selfreducibility}

Note that while we now have an algorithm to test whether there is an identifying closed 
walk, this algorithm does not construct the identifying closed walk.
Furthermore, an identifying closed walk is not necessarily a simple
cycle. However, our interpretation in terms of resultants in Lemma~\ref{lem:res}
is only valid for simple cycles.

The first problem can be solved using a technique called selfreducibility.
For instance, assume we have an algorithm that decides the well-known
satisfiability problem, how can we find a satisfying assignment?
We set the first variable to $1$ and check whether the resulting
formula is still satisfiable. If yes, we know that we can set $x_1$
to $1$. Otherwise we set $x_1$ to $0$. Then we go one with $x_2$ and so forth.

We can do the same here: We remove an arbitrary edge from the graph,
check whether there is still an identifying closed walk. If yes, we can safely
remove the edge. If not, we know that the edge is in every remaining 
identifying closed walk and we keep it. The edges that remain will form  an identifying closed
walk.

The second problem will be solved by the following lemma:

\begin{lemma} \label{lem:shortest} 
The shortest identifying closed walk in $G$ is always a cycle. %
\end{lemma}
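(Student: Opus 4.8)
The plan is to argue by contradiction on the length. I would suppose that $p = x_0, x_1, \dots, x_t = x_0$ is a shortest identifying closed walk but is \emph{not} a simple cycle, so that some vertex repeats: there are indices $0 \le i < j \le t$ with $x_i = x_j$ and $(i,j)\neq(0,t)$. At this repeated vertex I split $p$ into the contiguous \emph{inner loop} $x_i,\dots,x_j$ and the \emph{outer} closed walk $x_0,\dots,x_i(=x_j),x_{j+1},\dots,x_t$. Both are genuine closed walks, and since $(i,j)\neq(0,t)$ each has length strictly between $1$ and $t$, hence is strictly shorter than $p$.

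The single algebraic ingredient I would use is that a closed walk is \emph{non-identifying} exactly when its weight matrix is a scalar multiple of the identity. This is the equivalence already exploited in Lemma~\ref{lem:walk:pit}: by Lemma~\ref{lem:cycle:ident} non-identifying means $a=0$ and $b=c$, and since case~3 does not occur in our setting this forces $d=0$ as well, i.e.\ the matrix is a multiple of $I$. The reason to isolate this form is that scalar matrices are \emph{central} and multiplicatively closed, which is what rescues the argument from the noncommutativity of the $2\times 2$ matrix ring.

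Next I would read off the weight of $p$ from the matrix-product description (Lemma~\ref{lem:res}) by grouping the edges into three contiguous blocks, obtaining $M_p = C\,B\,A$, where $B$ is the weight of the inner loop, $A$ is the weight of the prefix $x_0,\dots,x_i$, $C$ is the weight of the suffix $x_j,\dots,x_t$, and the weight of the outer walk is exactly $CA$. I would then assume, toward a contradiction with minimality, that \emph{both} the inner loop and the outer walk are non-identifying, so $B = \beta I$ and $CA = \gamma I$ for scalars $\beta,\gamma$. Because $\beta I$ is central it pulls out of the product, giving $M_p = C(\beta I)A = \beta\,(CA) = \beta\gamma\,I$; thus $p$ would be non-identifying, a contradiction. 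Hence at least one of the two shorter walks is itself identifying, contradicting the minimality of $p$, and so the shortest identifying closed walk cannot repeat a vertex and must be a cycle.

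The step I expect to be the main obstacle is precisely this last algebraic reduction, because the naive expectation $M_p = (CA)\,B$ is \emph{false} over the noncommutative matrix ring. The argument works only because the inner block is contiguous and becomes a central scalar, so that it commutes out and leaves exactly the outer weight $CA$. This also explains why reducing non-identifiability to ``scalar multiple of $I$'' (rather than the weaker $a=0,\,b=c$ of case~3) is indispensable: a non-scalar shear inner block $\left(\begin{smallmatrix} b & d \\ 0 & b\end{smallmatrix}\right)$ with $d\neq 0$ need not commute past $A$, and could produce a nonzero lower-left entry in $C\,B\,A$, turning a non-identifying split into an identifying $p$ and breaking the induction.
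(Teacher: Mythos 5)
Your proposal is correct and takes essentially the same route as the paper: both excise the loop at a repeated vertex and exploit that a non-identifying sub-walk has weight a scalar multiple of $I$, which is central and so commutes out of the grouped product $M_p = C\,B\,A$, leaving exactly the outer weight $CA$ (the paper phrases this as $\alpha\, w(p') = w(p)$, with the same justification that ``non-identifying'' means scalar because case~3 of Lemma~\ref{lem:cycle:ident} is excluded). The only differences are cosmetic: you argue by contraposition and split at an arbitrary repetition, while the paper splits at the first repetition (making the inner part a simple cycle) and does the case distinction directly.
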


\begin{proof}%
Let $p$ be a closed walk that is identifying, that is, $w(p)$ is not a multiple of the
identity matrix. Assume $p$ is not simple.
We walk along $p$ until we visit a node $v$ twice. The 
part of $p$ between these two occurrences does not contain any node
twice by construction, hence it is a (simple) cycle. Call it $c$.
If $w(c)$ is not a multiple of the identity matrix, then $c$ is a shorter
identifying closed walk. If $w(c)$ is a multiple of the identity matrix, say 
$\alpha \cdot I$,
then we can remove $c$ from $p$. Let $p'$ be the resulting closed walk.
Since the weights are iterated matrix products, we have
$\alpha w(p') = w(p)$, hence $w(p')$ is not a multiple of the identity matrix.
(Note that $\alpha \not= 0$, since $w(p) \not= 0$.)
Again we found a shorter identifying closed walk.
\end{proof}

\subsection{An Algorithm for Finding Identifying Cycles}

Algorithm~\ref{alg:idcycle} summarizes the results of this section.

\begin{lemma} \label{lem:idcycle}
Algorithm~\ref{alg:idcycle} finds an identifying cycle if there is one.
It runs in randomized polynomial time. %
\end{lemma}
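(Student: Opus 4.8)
The plan is to assemble Algorithm~\ref{alg:idcycle} from the three ingredients developed in this section and verify both correctness and the running-time bound. The overall structure is: first decide whether an identifying closed walk exists at all, then extract an actual identifying \emph{cycle} from that existence guarantee. Let me address each piece in turn.

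First I would bound the length of the walk we need to search for. By Lemma~\ref{lem:shortest}, the \emph{shortest} identifying closed walk is always a simple cycle, and a simple cycle visits each of the $n = |X|$ nodes at most once, so it has length at most $n$. Hence it suffices to test, for each node $v_i$, whether $v_i$ lies on an identifying closed walk of length at most $t := n$; by Lemma~\ref{lem:walk:pit} each such test runs in randomized polynomial time. Iterating over all $n$ choices of $v_i$ (and, if desired, over all lengths $t \le n$) therefore decides in randomized polynomial time whether $\vec G$ contains \emph{any} identifying cycle. If no node passes the test, the algorithm correctly reports that there is no identifying cycle, which by Lemma~\ref{lem:shortest} is equivalent to the absence of any identifying closed walk.

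Next, assuming an identifying closed walk exists, I would invoke the selfreducibility argument to construct an explicit edge set. Fix the starting node $v_i$ found above. We iterate over the edges of $\vec G$: for each edge $e$, we tentatively delete it and re-run the test of Lemma~\ref{lem:walk:pit} on the reduced graph. If an identifying closed walk still exists, the deletion is permanent; otherwise $e$ is essential to every remaining identifying walk and we keep it. After processing all edges, the surviving edges support an identifying closed walk but no proper subgraph does, so the surviving walk is \emph{minimal} and hence, by the same reasoning as in Lemma~\ref{lem:shortest}, a simple cycle. Each selfreducibility step performs one PIT test, and there are at most $|E|$ steps, so this phase also runs in randomized polynomial time. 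The key correctness point, which I would state carefully, is that Lemma~\ref{lem:shortest} guarantees the minimal surviving structure is a genuine cycle, which is exactly what Lemma~\ref{lem:res} requires for the resultant interpretation to be valid.

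The main obstacle I anticipate is the control of the error probability across the many PIT calls. The construction invokes the Schwartz--Zippel test $O(n + |E|)$ times (once per node in the detection phase and once per edge in the selfreducibility phase), and a single false negative anywhere could cause the algorithm either to miss an existing cycle or to delete an essential edge and destroy the cycle it is building. I would handle this exactly as described in the Preliminaries: since the total number $T$ of PIT calls is polynomially bounded, choosing the sampling set $S$ of size $dT/\epsilon$ and applying the union bound keeps the overall error probability at most $\epsilon$. The degrees $d$ and circuit sizes remain polynomially bounded because the weights are products of at most $n$ many $2\times 2$ matrices whose entries are covariance polynomials computed by the polynomial-size circuit for $(I-\Lambda)^{-1}\Omega(I-\Lambda)^{-T}$. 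Collecting these bounds establishes the claimed randomized polynomial running time, completing the proof.
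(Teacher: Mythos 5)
Your detection phase is fine, but the extraction phase has a genuine gap, and it comes precisely from discarding the increasing-$t$ loop of Algorithm~\ref{alg:idcycle}, which is not a mere efficiency device but the load-bearing step in the paper's correctness argument. The paper runs $t = 2,\dots,n$ upward so that the first success occurs at the \emph{minimal} length; at that $t$, every witness walk is a shortest identifying closed walk and hence, by Lemma~\ref{lem:shortest}, already a simple cycle, so selfreducibility necessarily converges to a cycle. You instead fix $t := n$, fix the start node $v_i$ that passed the test, and claim the edge-minimal surviving subgraph must be a simple cycle ``by the same reasoning as in Lemma~\ref{lem:shortest}.'' That reasoning does not transfer: the proof of Lemma~\ref{lem:shortest} excises a sub-cycle $c$ from a non-simple walk only when $w(c)$ is a scalar multiple of $I$; when $w(c)$ is \emph{not} scalar it exhibits a shorter identifying walk, but not one through your fixed node $v_i$, so minimality with respect to the through-$v_i$ property gives you nothing. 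Concretely, consider a ``lollipop'': a pendant path from $v_i$ to a node $u$ on an identifying cycle $c$ (all edges rank~2). The closed walk that goes down the path (weight $A$, a product of invertible matrices), around $c$, and back (the corresponding adjoint product) has weight $\det(A)\,A^{-1} w(c)\, A$, a nonzero scalar times a conjugate of $w(c)$, hence identifying whenever $c$ is. For suitable sizes this walk has length at most $n$, so $v_i$ passes your test; yet \emph{every} edge of the path and of $c$ is essential for an identifying closed walk through $v_i$, so your fixed-$v_i$ selfreducibility terminates with the whole lollipop rather than a cycle, and the resultant interpretation of Lemma~\ref{lem:res} --- the very thing you say you are securing --- does not apply to the output.

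The gap is repairable in two ways. Either (i) keep the paper's schedule: search lengths $t = 2, 3, \dots$ in increasing order and start selfreducibility only at the first successful $t$; then any witness is a shortest identifying closed walk and Lemma~\ref{lem:shortest} applies directly, which is exactly the paper's one-line proof. Or (ii) run each selfreducibility test \emph{globally}, asking whether the reduced graph still contains an identifying closed walk of length at most $n$ through \emph{some} node; then take a shortest identifying closed walk in the final graph, note by Lemma~\ref{lem:shortest} that it is a cycle $c$, and since $c$ alone witnesses the property, edge-minimality forces the final graph to equal $c$. Your error-probability accounting via the union bound and the degree and circuit-size bounds are correct and match the paper's remarks. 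One further caution: the proof of Lemma~\ref{lem:walk:pit} actually tests walks of length \emph{exactly} $t$ (the relevant entry of $\hat W_1 \cdots \hat W_t$), its statement notwithstanding, which is another reason the algorithm iterates over all $t \le n$ rather than testing $t = n$ once, as a single exact-length test can miss short identifying cycles for parity and length reasons.
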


\begin{proof}%
By increasing the length $t$, we make sure that if we find an
identifying closed walk, it will be a shortest one. By Lemma~\ref{lem:shortest},
this will be an identifying cycle. If there is an identifying walk,
then the shortest has length at most $n$. Therefore, the algorithm
finds a identifying cycle if there is one.

The algorithm is randomized due to the use of the Schwartz-Zippel lemma.
However, the error can be made small by using large enough numbers
as explained before. The running time is clearly polynomial.
\end{proof}

\begin{algorithm}[tb]
\caption{Finding identifying cycles}
\label{alg:idcycle}
\textbf{Input}: A graph $\vec G = (X,E)$ \\
\textbf{Output}: An identifying cycle
\begin{algorithmic}[1] %
\FOR{$t = 2,\dots, n$}
\STATE Build the corresponding graph $\vec G^{(t)}$ and matrix $\hat W$.
\STATE Replace all indeterminates by random values according to the Schwartz-Zippel lemma.
\STATE Compute $\hat W_1 \cdots \hat W_t$.
\STATE Check whether there is an identifying closed walk
using Lemma~\ref{lem:walk:pit} for $i = 1,\dots,n$.
\IF{ there is such a walk}
\STATE Compute such a walk $p$ using selfreducibility.
\STATE \textbf{return} $p$
\ENDIF
\ENDFOR
\STATE \textbf{return} no solution
\end{algorithmic}
\end{algorithm}

\section{Identification Using Rank-1 Edges}

We consider a missing edge $i \leftrightarrow j$ of rank 1. Again, $p$ is the parent of $i$
and $q$ of $j$ in the directed graph $(V,D)$. We will show that a rank-1 edge always uniquely
identifies one of the nodes (and need not provide any information
about the other node, since when we plug in the unique value in the equation,
the equation will be identically $0$). 

\begin{definition}
Let $U,W$ be two sets of nodes. A pair $(S,T)$ of sets of nodes \emph{trek-separates}
$U$ and $W$ if every trek from any $i \in U$ to any $j \in W$ intersects $S$ on its left side
or $T$ on its right side.
\end{definition}

\begin{theorem}[see {\citealp[Theorem 11.1]{drton2018algebraic}}]
The submatrix with rows corresponding to $U$ and columns corresponding to $W$ has 
(generic) rank $\le r$ if there is a pair $(S,T)$ with $|S| + |T| \le r$ such that
$(S,T)$ trek-separates $(U,W)$.
\end{theorem}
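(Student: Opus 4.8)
The plan is to establish the sufficient condition by showing that trek-separation forces every sufficiently large minor of the submatrix to vanish \emph{identically} as a polynomial in the $\lambda_{i,j}$ and $\omega_{i,j}$; this is exactly the assertion that the submatrix has generic rank $\le r$. Write $\Sigma_{U,W}$ for the submatrix with rows indexed by $U$ and columns indexed by $W$. It then suffices to prove that for all $A \subseteq U$ and $B \subseteq W$ with $|A| = |B| = r+1$, the minor $\det \Sigma_{A,B}$ is the zero polynomial. First I would record the combinatorial meaning of the entries: by the factorization (\ref{eqn:SEM}) together with Wright's trek rule, $\sigma_{i,j} = \sum_{\tau} M(\tau)$, summed over all treks $\tau$ from $i$ to $j$, where $M(\tau)$ is the product of the edge labels along $\tau$ (with the extra source covariance in the second case).

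The key step is a Lindström--Gessel--Viennot / Cauchy--Binet expansion of the minor. I would expand $\det \Sigma_{A,B} = \sum_{\pi} \operatorname{sgn}(\pi) \prod_i \sigma_{a_i, b_{\pi(i)}}$ and then multiply out each $\sigma_{a_i,b_{\pi(i)}}$ into its treks, obtaining a signed sum over \emph{systems} of $r+1$ treks joining $A$ to $B$ (one trek per matched pair). I would then apply a sign-reversing involution that cancels every system in which two treks meet at a common vertex on their left sides, or at a common vertex on their right sides (a ``sided intersection''): swapping the two tails past the first such shared vertex yields a companion system of equal monomial weight but opposite permutation sign. What survives is precisely the signed sum over those trek systems whose left parts are pairwise vertex-disjoint and whose right parts are pairwise vertex-disjoint. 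Establishing this identity --- that the minor equals the signed sum over sided-intersection-free trek systems --- is the main obstacle, and is exactly the content of the cited Theorem~11.1 (equivalently the trek-separation lemma of Sullivant--Talaska--Draisma); I would either invoke it directly or carry out the involution with care, the delicate part being the sign bookkeeping.

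Finally I would close with a short counting argument. Suppose $(S,T)$ trek-separates $U$ and $W$ with $|S| + |T| \le r$, and consider any surviving sided-intersection-free system of $r+1$ treks from $A \subseteq U$ to $B \subseteq W$. By trek-separation, each trek meets $S$ on its left side or $T$ on its right side. Since the left parts of the system are pairwise vertex-disjoint, at most $|S|$ treks can use a vertex of $S$ on the left, and since the right parts are pairwise vertex-disjoint, at most $|T|$ treks can use a vertex of $T$ on the right. Hence the system contains at most $|S| + |T| \le r$ treks, contradicting the fact that it has $r+1$ of them. Therefore no sided-intersection-free system exists, the signed sum is empty, and $\det \Sigma_{A,B} = 0$ identically. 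As this holds for every such $A$ and $B$, the generic rank of $\Sigma_{U,W}$ is at most $r$, as claimed.
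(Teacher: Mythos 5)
Your proposal is correct in substance, but you should know that the paper does not prove this statement at all: it imports it verbatim from Drton's survey (Theorem 11.1 there), where it is the easy direction of the trek-separation theorem of Sullivant, Talaska, and Draisma. What you have written is essentially a reconstruction of that literature proof: reduce generic rank $\le r$ to identical vanishing of all $(r+1)\times(r+1)$ minors over the rational function field, expand each minor via Wright's trek rule into a signed sum over trek systems, cancel everything except systems whose left parts are pairwise vertex-disjoint and whose right parts are pairwise vertex-disjoint, and then observe that trek-separation by $(S,T)$ with $|S|+|T|\le r$ forbids such a system of $r+1$ treks, because the sided disjointness lets each vertex of $S$ lie on at most one left part and each vertex of $T$ on at most one right part. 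That final counting step is stated exactly right and is the same argument as in the source.

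Two caveats. First, your attribution of the key identity is slightly off and flirts with circularity: the claim that $\det \Sigma_{A,B}$ equals the signed sum over sided-intersection-free trek systems is \emph{not} ``exactly the content'' of the cited Theorem 11.1 --- that theorem is the rank bound you are in the middle of proving, so invoking it at that point would be circular. The identity is an intermediate lemma of Sullivant--Talaska--Draisma (their trek rule for minors), and you should either cite that lemma or prove it. Second, if you prove it yourself, the direct tail-swap involution on trek systems needs the standard Lindstr\"om--Gessel--Viennot care: a canonical choice of the first sided intersection so that the map is a genuine involution, plus the check that swapping the tails transposes the row matching (flipping the permutation sign) while preserving the monomial weight, including the $\omega$-factor at the top of each trek. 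The cleaner route, and the one taken in the source, avoids the involution entirely: apply Cauchy--Binet to the factorization $\Sigma = (I-\Lambda)^{-1}\Omega(I-\Lambda)^{-T}$ and the Gessel--Viennot lemma to the minors of $(I-\Lambda)^{-1}$, which yields the sided-disjoint trek systems directly. With either route, your concluding argument closes the proof correctly, and it establishes only the sufficiency direction --- which is all the paper's statement asserts, though the full theorem in the literature is an equality.
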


In our case $U = \{i,p\}$ and $W = \{j,q\}$ ($p$ and $q$ can be potentially the same).
We are interested when
$\left(\begin{smallmatrix} \sigma_{p,q} & \sigma_{i,q} \\ 
\sigma_{p,j} & \sigma_{i,j} \end{smallmatrix} \right)$
has rank $1$. In this case either $S$ or $T$ consists of one node and the other set is empty.
The situation is symmetric, so we assume that $|T| = 1$.
$T$ cannot be $\{j\}$, since then the trek from $i$ to $q$ (consisting only of directed edges)
is not separated. Let $0,p_1,\dots,p_t,p,i$ be the directed path from the root $0$ to $i$.
The trek consisting of the left side being any suffix of this path starting in node
$x \in \{0,p_1,\dots,p_t,p,i\}$ and the bidirected edge $x \leftrightarrow j$ 
is not trek-separated by $(S,T)$. Therefore, all bidirected edges $x \leftrightarrow j$
are missing! See Figure~\ref{fig:rank-1:1} for an illustration.

We will now prove that all these missing edges are useless for identifying any of the 
$\lambda$-parameter in the directed path.
\begin{itemize}
\item As $0 \leftrightarrow j$ is missing, we can identify 
$\lambda_{q,j} = \frac{\sigma_{0,j}}{\sigma_{0,q}}$.
\item Since $p_1 \leftrightarrow j$ is missing, we get by (\ref{eq:zander1})
\[
  \lambda_{0,p_1} \lambda_{q,j} \sigma_{0,q} - \lambda_{0,p_1} \sigma_{0,j}
  - \lambda_{q,j} \sigma_{p_1,q} + \sigma_{p_1,j} = 0.
\]
Plugging in $\lambda_{q,j} = \frac{\sigma_{0,j}}{\sigma_{0,q}}$, we get
   $\sigma_{0,j} \sigma_{p_1,q} - \sigma_{0,q} \sigma_{p_1,j} = 0$.
Thus this edges does not yield any information about $\lambda_{0,p_1}$ (and
we already identified $\lambda_{q,j}$). Furthermore, the equation
means that the missing edge $p_1 \leftrightarrow j$ is rank-1, too,
and we get that $\lambda_{q,j} = \frac{\sigma_{0,j}}{\sigma_{0,q}} 
= \frac{\sigma_{p_1,j}}{\sigma_{p_1,q}}$.
\item Since $p_2 \leftrightarrow j$ is missing, we get 
\[
  \lambda_{p_1,p_2} \lambda_{q,j} \sigma_{p_1,q} - \lambda_{p_1,p_2} \sigma_{p_1,j}
  - \lambda_{q,j} \sigma_{p_2,q} + \sigma_{p_2,j} = 0.
\]
Plugging in $\lambda_{q,j} = \frac{\sigma_{p_1,j}}{\sigma_{p_1,q}}$ yields
   $\sigma_{p_1,j} \sigma_{p_2,q} - \sigma_{p_1,q} \sigma_{p_2,j} = 0$.
Thus this edge does not yield any information about $\lambda_{0,p_2}$ (and
we already identified $\lambda_{q,j}$). Moreover, the equation
means that the missing edge $p_2 \leftrightarrow j$ is rank-1, too,
and we get that $\lambda_{q,j} = \frac{\sigma_{0,j}}{\sigma_{0,q}}
= \frac{\sigma_{p_1,j}}{\sigma_{p_1,q}} 
= \frac{\sigma_{p_2,j}}{\sigma_{p_2,q}}$.
\item We can go on inductively until we reach the node $p$ and
get $\lambda_{q,j} = \frac{\sigma_{0,j}}{\sigma_{0,q}} = \dots 
=\frac{\sigma_{p,j}}{\sigma_{p,q}}$.
\item Finally, since the edge $i \leftrightarrow j$ is missing,  
\[
  \lambda_{p,i} \lambda_{q,j} \sigma_{p,q} - \lambda_{p,i} \sigma_{p,j}
  - \lambda_{q,j} \sigma_{i,q} + \sigma_{i,j} = 0.
\]
Plugging in $\lambda_{q,j} = \frac{\sigma_{p,j}}{\sigma_{p,q}}$, we get
   $\sigma_{p,j} \sigma_{i,q} - \sigma_{p,q} \sigma_{i,j} = 0$
and the equation does not depend on $\lambda_{p,i}$.
\end{itemize}

There is the special case that $p = q$. This case is treated in the exact same way:
$q$ lies on any directed path to $i$. However, this does not matter
in the definition of trek-separation, since the set $T = \{q\}$ can only intersect
the right side of a trek. (Same when the set $S$ is nonempty.)

Finally $i$ could be a predecessor of $j$ (or vice versa). 
Assume that $i$ is a predecessor of $j$ (and of $q$ in this case, 
since the underlying structure is a tree.)
$i$ could be equal to $q$. We need to intersect the right side of a given trek,
because the trek $i \to \dots \to j$ has no left side. $T = \{j\}$ is not
possible, since this does not intersect the trek $i \to \dots \to q$.
Thus we need to choose $T = \{q\}$ (or any other node between $i$ and $q$). 
Let $0,p_1,\dots,p_t,p$ be the nodes on the path from the root $0$ to $p$.
All the edges $0 \leftrightarrow j$, $p_1 \leftrightarrow j$,\dots,
$p_t \leftrightarrow j$, and $p \leftrightarrow j$
have to be missing, because otherwise, we do not intersect the trek
$i \leftarrow p \leftarrow \dots \leftarrow 0 \leftrightarrow j$ and the corresponding
ones with $0$ replaced by $p_1,\dots,p_t$ or $p$.
See Figure~\ref{fig:rank-1:2} for an illustration.
Thus the missing edges we get are the same as in the first case,
and the same calculations work.

The case when $j$ is a predecessor of $i$ is symmetric. In this case, we 
get similar equations and can identify $i$.

Altogether, we have proven the following

\begin{lemma} \label{lem:missing:rank-1}
For every  missing rank-1 edge $i \leftrightarrow j$, the edges $0 \leftrightarrow i$ 
or $0 \leftrightarrow j$ (or both) are missing.
The corresponding expressions satisfy the equation of the edge $i \leftrightarrow j$.
\qed
\end{lemma}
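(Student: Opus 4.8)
\medskip
\noindent\emph{Proof proposal.} The plan is to use the trek-separation theorem as the main structural tool. Since $i \leftrightarrow j$ has rank~$1$, the submatrix $\left(\begin{smallmatrix}\sigma_{p,q} & \sigma_{i,q}\\ \sigma_{p,j} & \sigma_{i,j}\end{smallmatrix}\right)$ with rows $U = \{i,p\}$ and columns $W = \{j,q\}$ has generic rank $\le 1$. By the trek-separation theorem there is a pair $(S,T)$ with $|S|+|T| \le 1$ trek-separating $U$ and $W$, so exactly one of $S,T$ is a singleton. Exploiting the symmetry of swapping the two sides, I would assume $|T| = 1$ and aim to show $0 \leftrightarrow j$ is missing; the case $|S| = 1$ then yields $0 \leftrightarrow i$ missing, which gives the disjunction (or both) claimed in part~(a).

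Next I would rule out the bad separators and extract the chain of missing edges. First I would exclude $T = \{j\}$: the purely directed trek $i \to \cdots \to q$ has right side $\{q\}$, which $\{j\}$ does not meet, so $\{j\}$ cannot separate; this forces $T = \{q\}$ (or, if $i$ is an ancestor of $j$, a node strictly between $i$ and $q$). Now write the root-to-$i$ path $0, p_1, \dots, p_t, p, i$. For any ancestor $x$ on this path, if the edge $x \leftrightarrow j$ were present it would produce the trek $i \leftarrow \cdots \leftarrow x \leftrightarrow j$ whose right side is the single node $j$; since $S = \emptyset$ and $T = \{q\} \ne \{j\}$, this trek would be unseparated, a contradiction. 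Hence every such edge is missing, and in particular $0 \leftrightarrow j$ is missing, establishing part~(a).

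For part~(b) I would propagate the identification along this chain by a telescoping induction. The missing edge $0 \leftrightarrow j$ identifies $\lambda_{q,j} = \sigma_{0,j}/\sigma_{0,q}$. Substituting the current value of $\lambda_{q,j}$ into equation~(\ref{eq:zander1}) for the missing edge $p_k \leftrightarrow j$ cancels the two terms carrying $\lambda_{p_{k-1},p_k}$ and collapses the equation to the vanishing minor $\sigma_{p_{k-1},j}\,\sigma_{p_k,q} - \sigma_{p_{k-1},q}\,\sigma_{p_k,j} = 0$; by Lemma~\ref{lem:rank1} this both certifies that $p_k \leftrightarrow j$ is itself rank-$1$ and yields the consistent updated value $\lambda_{q,j} = \sigma_{p_k,j}/\sigma_{p_k,q}$. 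Iterating up to $p$ and then substituting $\lambda_{q,j} = \sigma_{p,j}/\sigma_{p,q}$ into the equation of $i \leftrightarrow j$ makes it vanish identically and independently of $\lambda_{p,i}$, so the identified expression satisfies the edge equation, as required.

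I expect the main obstacle to be the combinatorial case analysis of the single separator node rather than the algebra, which is a routine substitution. Two configurations need care: when $p = q$, the argument is unaffected because $T = \{q\}$ can only meet right sides of treks; and when $i$ is an ancestor of $j$ (or vice versa), the trek from $i$ has no left side, which again forces $T$ to sit on the right (at $q$ or between $i$ and $q$) and reproduces exactly the same family of missing edges $x \leftrightarrow j$ up to the root. The hard part is therefore verifying in the second paragraph that no alternative single-node separator can exist and that the tree structure propagates the missing-edge family all the way to $0$.
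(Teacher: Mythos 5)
Your proposal is correct and takes essentially the same route as the paper's proof: trek-separation with $|S|+|T|\le 1$ and $T\neq\{j\}$ forces all bidirected edges $x \leftrightarrow j$ from the ancestors $x$ of $i$ to be missing (in particular $0 \leftrightarrow j$), and the same telescoping substitution $\lambda_{q,j}=\sigma_{0,j}/\sigma_{0,q}=\dots=\sigma_{p,j}/\sigma_{p,q}$ along the chain shows the identified expression satisfies the equation of $i \leftrightarrow j$, with the same treatment of the special cases $p=q$ and the ancestor configurations. Two cosmetic slips do not affect validity: in the general case the trek from $i$ to $q$ runs through a common ancestor rather than being a directed path $i \to \cdots \to q$ (so its right side is not just $\{q\}$, though it still avoids $j$), and the separation argument only requires $T\neq\{j\}$ with $S=\emptyset$ rather than literally forcing $T=\{q\}$ --- which is all your subsequent reasoning actually uses.
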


\begin{corollary}
Let $C$ be a connected component of the missing edge graph. If $C$ contains a rank-1 edge,
then $C$ also contains the root. \qed
\end{corollary}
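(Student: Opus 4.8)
The plan is to prove the corollary directly from Lemma~\ref{lem:missing:rank-1} by a short connectivity argument. Recall that the statement I want is: if a connected component $C$ of the missing edge graph $\miss M$ contains a rank-1 edge, then $C$ also contains the root~$0$. The key observation that does almost all the work is Lemma~\ref{lem:missing:rank-1}, which guarantees that for every missing rank-1 edge $i \leftrightarrow j$, at least one of the edges $0 \leftrightarrow i$ or $0 \leftrightarrow j$ is itself a missing edge. In the language of $\miss M = (V, \bar B)$, this means that one of these edges is present in $\miss M$, and hence directly connects the root to an endpoint of the rank-1 edge.

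First I would fix a rank-1 edge $i \leftrightarrow j$ lying in $C$. Since this edge is in $\miss M$, both $i$ and $j$ are nodes of the component $C$. Now apply Lemma~\ref{lem:missing:rank-1}: without loss of generality the edge $0 \leftrightarrow i$ is missing, i.e.\ $0 \leftrightarrow i \in \bar B$ and so it is an edge of $\miss M$. This edge joins the root $0$ to the node $i$, which already belongs to $C$. Because $C$ is a connected component and $i \in C$, any edge of $\miss M$ incident to $i$ must have its other endpoint in the same component (components of a graph are closed under adjacency). Therefore $0 \in C$, which is exactly what the corollary asserts. The case where instead $0 \leftrightarrow j$ is the missing edge is handled identically, using that $j \in C$.

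The argument is essentially immediate once Lemma~\ref{lem:missing:rank-1} is in hand, so there is no serious obstacle; the only thing to be careful about is the distinction between ``missing'' edges and edges of $\miss M$. Since $\miss M$ is defined on the complement $\bar B$, a \emph{missing} bidirected edge (one not in $B$) is precisely an \emph{edge} of $\miss M$, so the phrase ``$0 \leftrightarrow i$ is missing'' correctly translates into ``$0$ and $i$ are $\miss M$-adjacent.'' I would state this translation explicitly to avoid any confusion, and then the connectivity step closes the proof. I do not foresee any delicate case analysis beyond the symmetric $i \leftrightarrow j$ roles already covered by the lemma, so the proof should fit in a few lines.
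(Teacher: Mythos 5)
Your proof is correct and takes essentially the same route as the paper, which states the corollary without a separate proof precisely because it is an immediate consequence of Lemma~\ref{lem:missing:rank-1}: the rank-1 edge puts $i$ and $j$ in $C$, the lemma supplies an edge of $\miss M$ joining the root to one of them, and components are closed under adjacency. Your explicit remark that a ``missing'' bidirected edge is exactly an edge of $\miss M$ is a harmless clarification rather than a different argument.
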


\begin{figure}
\centering
\scalebox{0.7}{
\begin{tikzpicture}[rotate=-90]
    \node[circle, draw] (i) at (0,0) {$i$};
    \node[circle, draw] (j) at (2,0) {$j$};
    \node[circle, draw] (p) at (0,-1.5) {$p$};
    \node[circle, draw, fill = bananamania] (q) at (2,-1.5) {$q$};
    \node[circle, draw] (pt) at (0,-3) {$p_t$};
    \node (qt) at (2,-3) {};
    \node (mid) at (1,-5) {};
    \node[circle, draw] (p2) at (1,-7) {$p_2$};
    \node[circle, draw] (p1) at (1,-9) {$p_1$};
    \node[circle, draw] (root) at (1,-11) {$0$};
    \draw [->] (p) -- node [midway,above] {$\lambda_{p,i}$} (i);
    \draw [->] (q) -- node [midway,above] {$\lambda_{q,j}$} (j);
    \draw [->] (pt) -- node [midway,above] {$\lambda_{p_t,p}$} (p);
    \draw [->, dashed] (mid.center) -- (pt);
    \draw [->, dashed] (qt.center) -- (q);
    \draw [dashed] (mid.center) -- (qt.center);
    \draw [dashed] (p2) -- (mid.center);
    \draw [->] (p1) -- node [midway,above] {$\lambda_{p_!,p_2}$} (p2);
    \draw [->] (root) -- node [midway,above] {$\lambda_{0,p_1}$} (p1);
    \draw [<->, color = bleudefrance] (i) -- (j);
    \draw [<->, color = bleudefrance] (p) -- (j);
    \draw [<->, color = bleudefrance] (pt) -- (j);
    \draw [<->, color = bleudefrance] (p2) edge[bend right=27]  (j);
    \draw [<->, color = bleudefrance] (p1) edge[bend right=27]  (j);
    \draw [<->, color = bleudefrance] (root) edge[bend right=27]  (j);
\end{tikzpicture}
}
\caption{If the missing edge $i \leftrightarrow j$ has rank 1, then
all the other blue edges have to be missing by trek-separability.
The best choice for $T$ is $T = \{q\}$ (drawn yellow). \label{fig:rank-1:1}}
\end{figure}
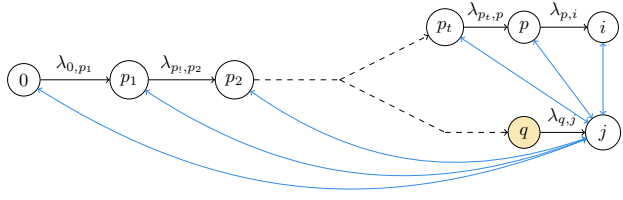

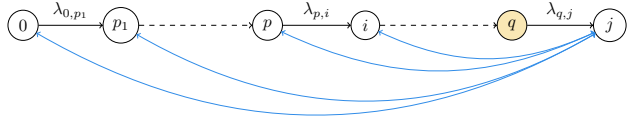
\begin{figure}
\centering
\scalebox{0.65}{
\begin{tikzpicture}
    \node[circle, draw] (i) at (7,0) {$i$};
    \node[circle, draw] (j) at (12,0) {$j$};
    \node[circle, draw] (p) at (5,0) {$p$};
    \node[circle, draw, fill = bananamania] (q) at (10,0) {$q$};
    \node[circle, draw] (p1) at (2,0) {$p_1$};
    \node[circle, draw] (root) at (0,0) {$0$};
    \draw [->] (p) -- node [midway,above] {$\lambda_{p,i}$} (i);
    \draw [->] (q) -- node [midway,above] {$\lambda_{q,j}$} (j);
    \draw [->, dashed] (p1) -- (p);
    \draw [->, dashed] (i) -- (q);
    \draw [->] (root) -- node [midway,above] {$\lambda_{0,p_1}$} (p1);
    \draw [<->, color = bleudefrance]  (i) edge[bend right = 25] (j);
    \draw [<->, color = bleudefrance] (p) edge[bend right= 25] (j);
    \draw [<->, color = bleudefrance] (p1) edge[bend right=30]  (j);
    \draw [<->, color = bleudefrance] (root) edge[bend right=30]  (j);
\end{tikzpicture}
}
\caption{If the missing edge $i \leftrightarrow j$ has rank 1, then
all the other blue edges have to be missing by trek-separability.
The set $T = \{q\}$ is drawn yellow. \label{fig:rank-1:2}}
\end{figure}

\begin{algorithm}[tb]
\caption{Identification}
\label{alg:final}
\textbf{Input}: A tree-shaped mixed graph $M = (V,D,B)$\\
\textbf{Output}: For each $\lambda_{p,i}$, we output whether it is generically identifiable,
$2$-identifiable, or unidentifiable. In the first two cases, we 
output corresponding FASTPs.
\begin{algorithmic}[1] %
\STATE Find all rank-1 edges in the missing edge graph $(V, \bar B)$ 
(using Lemma~\ref{lem:rank:edge}).
\STATE For each missing rank-1 edge $i \leftrightarrow j$,
check which of the parameters $\lambda_{p,i}$ or $\lambda_{q,j}$ we can identify 
(using Lemma~\ref{lem:missing:rank-1}).
Mark the  node $i$ or $j$, respectively, and compute
a corresponding rational expression.
\STATE Remove all rank-1 edges from the missing edge graph.
Call the resulting graph $H$.
Let $C_1,\dots,C_t$ be the connected components of $H$.
\FOR{each connected component $C_i$}
\IF  {$C_i$ contains a marked node}
\STATE Propagate the result to all
unidentified nodes in $C_i$ and produce
corresponding rational expressions.
\ELSE
\STATE Find an identifying cycle in $C_i$ (using Algorithm~\ref{alg:idcycle}).
\STATE If no such cycle is found, report that all nodes of $C_i$ are unidentifiable.
\STATE If the cycle produces one solution, then progagate it to all the nodes
of $C_i$ and compute corresponding rational expressions.
\STATE If the cycle produces two solutions, then propagate it to all the nodes 
of $C_i$ and compute corresponding FASTPs. 
\STATE Plug the FASTPs into the equations of $C_i$ and use 
PIT to check whether all equations are satisfied. 
If yes, keep the solution, otherwise, drop it.
\ENDIF
\ENDFOR
\end{algorithmic}
\end{algorithm}

\section{The Final Algorithm}

Algorithm~\ref{alg:final} solves the identification problem. For each 
$\lambda_{p,i}$ is decides in randomized polynomial time
whether $\lambda_{p,i}$ is generically identifiable, $2$-identifiable, or unidentifiable.
(Other cases cannot occur.) In the first two cases, it provides
corresponding FASTPs.

\begin{theorem}
Algorithm~\ref{alg:final} is correct. 
\end{theorem}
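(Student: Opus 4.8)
The plan is to show that Algorithm~\ref{alg:final} correctly classifies every parameter by first stripping away the rank-1 edges, which contribute no binding constraint, and then analyzing each rank-2 connected component of $H$ through the Möbius-transform picture of Sections 5--6. First I would argue the reduction: by Lemma~\ref{lem:missing:rank-1}, each rank-1 edge $i \leftrightarrow j$ forces one endpoint, say $\lambda_{q,j}$, to be directly identified through a missing root edge via (\ref{eq:zander2}), and substituting this value into (\ref{eq:zander1}) turns it into a covariance identity that no longer mentions $\lambda_{p,i}$. Hence, once these direct (``marked'') identifications are recorded, deleting all rank-1 edges loses no information, and the remaining binding constraints are exactly the rank-2 edges, i.e.\ the edges of $H$. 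Since distinct components of $H$ share no variable, the solution set factorizes over $C_1,\dots,C_t$ (seeded by the marks), so it suffices to prove correctness component by component.

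For a fixed component $C_i$, all of whose edges are rank-2, I would invoke the Möbius-isomorphism results: by Lemma~\ref{lemma:id:rank2} and the proof of Lemma~\ref{lem:correct:1}, the edge transforms are invertible and identify the solution sets of all variables in $C_i$, so by Remark~\ref{rem:numbersols} every variable in $C_i$ has the same number of solutions as any single chosen one. I then run the algorithm's case distinction. If $C_i$ contains a marked node, that node's value is forced by (\ref{eq:zander2}) and therefore lies in every solution; propagating it along the rank-2 edges (using Lemma~\ref{lemma:id:rank2} and the rational case of Remark~\ref{remark:FASTP}) yields a unique value for each variable, so $C_i$ is generically identifiable, and consistency of the model guarantees these values are genuine. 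If $C_i$ has no marked node, correctness reduces to Algorithm~\ref{alg:idcycle}: by Lemma~\ref{lem:idcycle} it finds an identifying cycle iff one exists, and by the contrapositive of Lemma~\ref{lem:correct:0} the absence of such a cycle means infinitely many solutions, so all variables of $C_i$ are unidentifiable.

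When an identifying cycle is found, I would read off the quadratic $a x_1^2 + (c-b) x_1 - d$ from its weight product (Lemma~\ref{lem:cycle:ident}) and use that this cycle equation is a consequence of the edge equations along the cycle (via the resultant--matrix correspondence of Lemma~\ref{lem:res}), so every genuine solution for $x_1$ lies among the at most two roots of the quadratic. In case~2 ($a = 0$) there is a single candidate, which must be the unique solution because the system is consistent (case~3 cannot occur), giving generic identifiability. In case~1 the precise count is governed by the discriminant $\Delta = (c-b)^2 + 4ad$, itself a PIT instance; when $\Delta \not\equiv 0$ there are two candidate FASTPs, but since $C_i$ may contain further independent cycles these are only necessary conditions, which is exactly why the verification step~12 is needed: plugging each FASTP into all equations of $C_i$ and testing with PIT retains precisely the genuine solutions, of which consistency guarantees at least one. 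Whether one or two candidates survive determines generic identifiability versus $2$-identifiability, and propagating the surviving FASTP(s) through the rank-2 edges via Remark~\ref{remark:FASTP} supplies the claimed expressions for every variable. Finally, because any finite count equals the number of common roots of quadratics and is thus at most two, while consistency rules out zero, the only possible outcomes are generically identifiable, $2$-identifiable, or unidentifiable, so no other cases occur.

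I expect the main obstacle to be establishing the soundness and completeness of the verification step~12 in the presence of several independent cycles. One must argue both that the single identifying cycle already captures all solutions as candidates (the necessity direction, from Lemma~\ref{lem:res}) and that the per-variable count transfers uniformly across the entire component (from Remark~\ref{rem:numbersols}), so that testing the two candidates against every equation of $C_i$ recovers the exact solution count rather than merely an upper bound.
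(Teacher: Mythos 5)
Your proposal is correct and follows essentially the same route as the paper's proof: the same rank-1 reduction via Lemma~\ref{lem:missing:rank-1}, propagation through rank-2 components via Lemma~\ref{lemma:id:rank2} and Remark~\ref{remark:FASTP}, the equivalence of finiteness and identifying cycles via Lemmas~\ref{lem:correct:0} and~\ref{lem:correct:1} with Remark~\ref{rem:numbersols}, and the final verification of the two candidate FASTPs against all component equations. Your explicit emphasis on why step~12 is needed (the cycle equation being only a necessary condition) and on consistency ruling out zero solutions is a faithful elaboration of what the paper states more tersely, including its use of the FASTP root test of van der Zander et al.\ for that verification.
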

 
\begin{proof}
 Consider a variable $\lambda_{p,i}$. First assume that $i$ is contained
 in a rank-1 missing edge $i \leftrightarrow j$. If $i$ is connected
 to the root in $\miss M$, then $i$ is identified by (\ref{eq:zander2}). If $i$ is not
 connected to the root, then $j$ is. The solution for $j$
 satisfies the equation by Lemma~\ref{lem:missing:rank-1}. Therefore, it will
 not be useful for identifying $\lambda_{p,i}$. Thus we can safely remove
 all rank-1 edges in step 3. Assume that $\lambda_{p,i}$ is not 
 identified by a rank-1 edge.

If the rank-2 component of $\lambda_{p,i}$ contains a marked node, which is identifiable,
then each node in this component is identifiable by
Lemma~\ref{lemma:id:rank2}. And by the subsequent Remark~\ref{remark:FASTP},
we also get a rational expression.

If there is no marked node in the component, then the nodes in the component
are $k$-identifiable iff there is an identifying cycle by Lemmas~\ref{lem:correct:0}
and~\ref{lem:correct:1}.
If such a cycle is found, it can have either one or two solutions
by Lemma~\ref{lem:cycle:ident}. These one or two solutions can be propagated
using Lemma~\ref{lemma:id:rank2} and Remark~\ref{remark:FASTP}.
Note that we get in total at most two solutions for the component,
given by FASTPs (see Remark~\ref{rem:numbersols}).
If there is only one solution, then we are done. If there are two solutions,
we plug them into all equations of the component and see whether they both
satisfy all equations or just one of them. In the second case,
$\lambda_{p,i}$ is identifiable. In the first case, it is $2$-identifiable,
but not identifiable. (Note that we do not have to check equations of other
components, since they do not contain any variables of the present component).
We can check whether two FASTPs satisfy a bilinear equation
using the test by \citet[Lemma 2]{DBLP:conf/aistats/ZanderWBL22}, since
both FASTPs have the same square root term by Remark~\ref{remark:FASTP}.

If there is no identifying cycle, then the nodes are unidentifiable by 
Lemma~\ref{lem:correct:0}. Thus the algorithm is correct.
\end{proof}

The running time of Algorithm~\ref{alg:final} is clearly polynomial.
More concretely, it can be estimated as follows: We start with Algorithm \ref{alg:idcycle}. 
$n$ denotes the number of nodes of the SCM.
Instead of applying the Schwartz-Zippel lemma
in the end, it is more efficient to replace all indeterminates with random values right
from the start. 
The $\sigma_{i,j}$ can be computed in time $O(n^3 \log n)$ by repeated
squaring and
the entries of $\hat W_1 \dots \hat W_t$ for each $t = 1,\dots,n$ in total time $O(n^4)$
(using the standard $O(n^3)$ algorithm for matrix multiplication). 
Finding an identifying cycle then
takes $O(e)$ self reduction steps, where $e$ is the number of missing edges.
Each self reduction step costs $O(n^4)$. Thus the overall running 
time is $O(e \cdot n^4)$. (It can be improved to time 
$O(n^4 + e \cdot n^3)$ by using some dynamic update techniques.)
This time also dominates the running time of Algorithm \ref{alg:final}.
The worst case is when we have only one connected component. Finding an identifying
cycle takes time $O(e \cdot n^4)$ and checking the equations in line 12 takes time
$O(n^4)$ for each equation. There are $O(e)$ equations.

Finally note that in the first case of Lemma~\ref{lem:cycle:ident}, the discrimant
has only to be checked for being nonzero (which is an instance of PIT) instead of being greater than zero, since
we assume that there is always a solution.

\bibliographystyle{aaai24}
\bibliography{aaai24}

\end{document}